\documentclass{article}

\PassOptionsToPackage{numbers}{natbib}
\usepackage[main, final]{neurips_2026}

\usepackage[utf8]{inputenc} %
\usepackage[T1]{fontenc}    %
\usepackage{hyperref}       %
\usepackage{url}            %
\usepackage{booktabs}       %
\usepackage{amsfonts}       %
\usepackage{nicefrac}       %
\usepackage{microtype}      %
\usepackage{xcolor}         %

\usepackage{microtype}
\usepackage{graphicx}
\usepackage{subcaption}
\usepackage{booktabs} %
\usepackage{placeins}
\usepackage{amsmath}
\usepackage{algorithm}
\usepackage{algpseudocode}
\usepackage{multirow}
\usepackage{amssymb}
\usepackage{mathtools}
\usepackage{amsthm}
\usepackage{wrapfig}
\usepackage{cleveref}
\title{Coupled Guidance for Flow Matching}

\author{%
    Mateo Cl\'emente \\
    Huawei \\
    \texttt{mateo.clem@gmail.com} \\
    \And
    Léo Maxime Brunswic\\
    Huawei\\
    \texttt{leo@brunswic.fr}\\
    \And
    Rui Heng Yang\\
    Huawei\\
    \texttt{rui.heng.yang@huawei.com}\\
    \And
    Amir Rasouli\\
    Huawei \\
    \texttt{amir.rasouli@huawei.com}\\
}

\newtheorem{theorem}{Theorem}

\def\pigdm{$\Pi$GDM }
\def\pigdmp{$\Pi$GDM$_{+}$}
\def\gsim{$g^{\text{sim-inv-A}}$ }

\begin{document}

\maketitle

\begin{abstract}
Guidance methods are essential tools for steering the generation process in diffusion models, improving the generalizability of stochastic robotic controllers and enabling high-quality controlled image synthesis. Despite strong parallels between flow-matching and diffusion, integrating guidance into flow-matching models remains quite challenging. Recent work has proposed general guidance frameworks for flow-matching, however, they are direct adaptations of diffusion guidance methods and can prove unstable. In this work, we conduct a theoretical analysis of guidance in flow-matching and show that a naive translation of diffusion-based guidance to flow-matching breaks the coupling distribution used during training, resulting in weaker control over sampling accuracy. Building on this insight, we propose \textbf{Coupled Guidance}, a flow-matching-specific guidance mechanism derived from a geometric perspective that leverages the coupling structure inherent to flow-matching training. To validate the effectiveness of our approach, we conduct extensive experimentation on a custom robotic collision avoidance task using \textsc{Maniskill3}, four robotic dexterous manipulation tasks from the \textsc{Adroit} benchmark, and three image generation tasks, namely {inpainting}, {super-resolution}, and {deblurring}, using the ImageNet-128 and CelebA-HQ-256 datasets. Our findings show that our framework improves sample quality and guidance stability across all tasks, without imposing any additional overhead.
\end{abstract}
\section{Introduction}

Flow-matching~\cite{lipman2022flow,liu2022flow} has recently emerged as a powerful class of generative models, unifying advantages from both normalizing flows~\cite{papamakarios2021normalizing,grathwohl2018ffjord} and diffusion models~\cite{Ho_2020_denoising_Neurips,karras2022elucidatingdesignspacediffusionbased}. The diffusion-inspired MSE loss in flow-matching removes the need for backpropagation through an ODE solver, improves numerical stability, and introduces a beneficial geometric bias~\cite{wan2025elucidating}. Meanwhile, Neural ODE parameterizations~\cite{chen2018neural} offer highly expressive continuous-time dynamics and efficient sampling. Owing to this combination of features, flow-matching achieves strong results across various domains of generative AI, including robotic control~\cite{Wang_2025_CVPR,black2025pi, Xing_2025_CVPR}, image generation~\cite{Stoica_2025_ICCV, Schusterbauer_2025_CVPR}, audio synthesis~\cite{Wang_2025_ICCV, Ki_2025_ICCV}, and forecasting~\cite{Fu_2025_CVPR,rohbeck2025modeling}.

As flow-matching continues to scale to more complex domains, mechanisms for \emph{controlling} its sampling process become increasingly important. Guidance has played this role with great success in diffusion models~\cite{ho2022classifier,bansal2023universal,karras2024guiding}, enabling interactive editing~\cite{epstein2023diffusion}, improved action consistency~\cite{black2025realtimeexecutionactionchunking}, and improved robustness in downstream tasks. In robotics, for example, diffusion-based controllers are often trained offline, hence may never encounter obstacles during training. Guidance allows the injection of collision-avoidance heuristics at inference time~\cite{ye2025ra,yang2025capecontextawarediffusionpolicy}, adapting to previously unseen environments. In general, guidance methods steer the generation by perturbing model parameters~\cite{rajabi2025token}, reweighting distributions via task objectives~\cite{Lu_ICML_2023}, or conditioning on auxiliary cues, such as class labels~\cite{dhariwal2021diffusion}. Despite their success in diffusion models~\cite{Li_2025_CVPR_2,Ye_2025_CVPR,ye2025ra,shen2024understanding,yang2025capecontextawarediffusionpolicy}, analogous techniques for flow-matching remain underdeveloped. Recently, a flow-matching guidance framework, obtained by adapting diffusion-based guidance mechanisms to flow-matching, demonstrated promising results in locomotion and image synthesis. However, these adaptations are direct translations of their diffusion counterparts and do not explicitly account for the training coupling that is intrinsic to flow-matching but absent in diffusion models.

\begin{wrapfigure}{t}{0.53\textwidth}
    \vspace{-10pt}
    \includegraphics[width=0.5\textwidth]{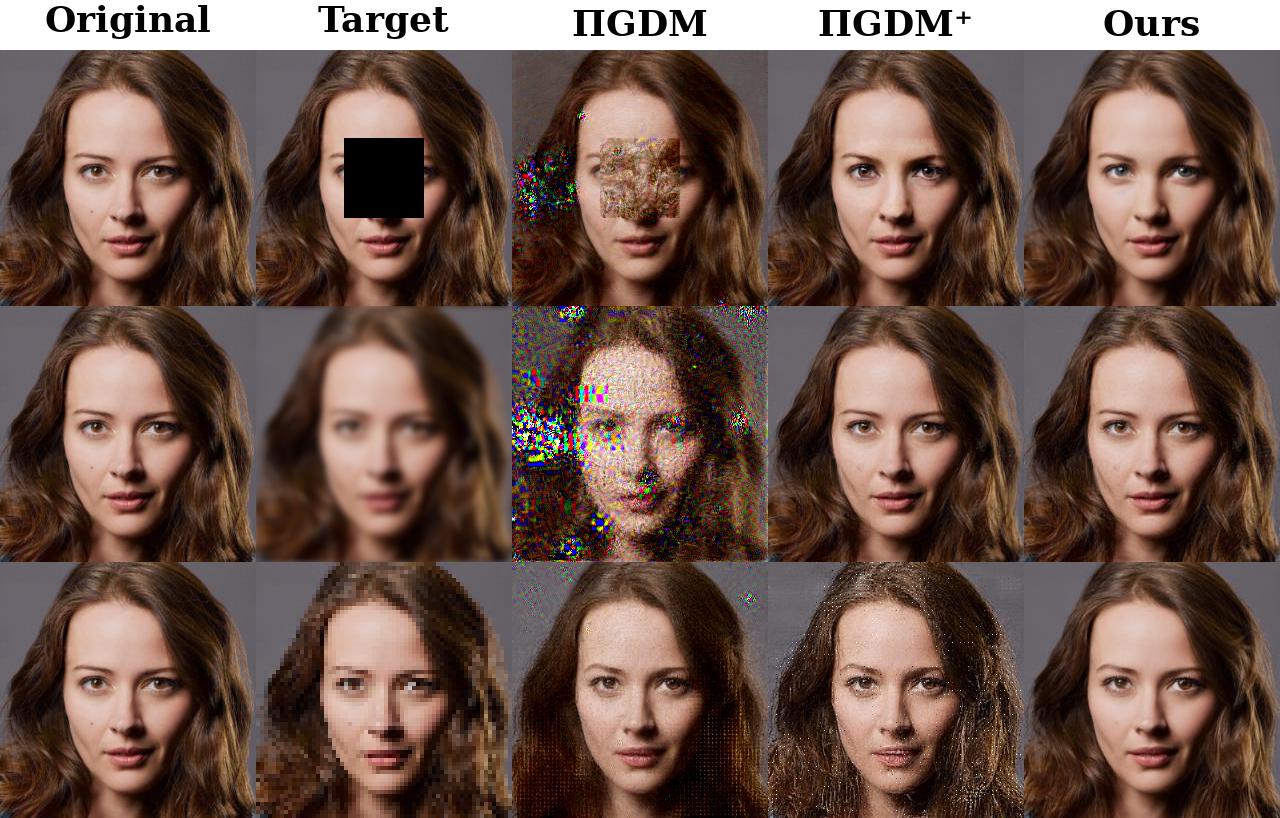}
    \caption{Qualitative examples of image reconstructions for noiseless inpainting, deblurring, and super-resolution, comparing naive \pigdm, \pigdmp, and our coupled guidance method.}
     \label{fig:page1_comparison} %
    \vspace{-10pt}
\end{wrapfigure}

In this work, we revisit training-free guidance for flow-matching from the perspective of the latent-data coupling on which flow-matching is built. This coupling, fixed during training, ensures that the model learns a consistent joint evolution of latent variables and data. We show that naive guidance violates this coupling, producing mismatched trajectories that hinder controllability and sampling accuracy. Leveraging this insight, we propose \emph{Coupled Guidance}, an efficient correction to the guided sampling process that preserves the training coupling. Our method employs a coordinated pair of guidance terms: a \emph{data-guidance} component acting in \textit{data space} and a corresponding \emph{latent-guidance} component acting in \textit{latent space}. Together, these terms maintain the structure of the flow-matching coupling, resulting in robust and controllable guided sampling with negligible additional overhead and minimal hyperparameter tuning. We evaluate the proposed Coupled Guidance method on robotic motion control with obstacle avoidance, four dexterous-hand manipulation tasks, and three image generation problems, namely inpainting, deblurring, and super-resolution (see Figure \ref{fig:page1_comparison}).

In sum, the main contributions of our work are as follows: i) A correction of guidance for flow-matching that jointly controls guidance in data space and in latent space; ii) A theoretical bound demonstrating the necessity of synchronizing guided latent and data distributions; iii) Extensive empirical evaluations on robotics and image generation tasks to validate the proposed guidance mechanism.

\section{Related Works}

\paragraph{Diffusion Models and Flow-Matching}
Diffusion models \cite{dhariwal2021diffusion,Yang_diffusion_2020} are a class of generative frameworks that operate by adding random noise to data samples to learn a reverse denoising process in order to generate new samples. Although diffusion models have been very effective in various generative tasks~\cite{ye2025ra,choi2024dice,epstein2023diffusion,huang2023make},  they suffer from multiple shortcomings, such as noise distribution assumptions and computationally expensive denoising processes. These have led to a number of recent works on solving these issues~\cite{salimans2022progressive,karras2022elucidatingdesignspacediffusionbased,zheng2023fast,clemente2025two}.

Flow-matching \cite{lipman2022flow} remedies these shortcomings by directly learning a continuous transport vector field, mapping the latent distribution to the data distribution. Unlike diffusion models, flow-matching enables the choice of any latent distributions (not necessarily Gaussian) and coupling, enhancing the fidelity of generative modeling and simplifying the transport plans \cite{liu2022flow}. As a result, high-quality samples can be achieved with very few sampling steps, as low as a single function evaluation \cite{frans2025stepdiffusionshortcutmodels}. Due to such characteristics, flow-matching has been adopted widely in various domains of AI, ranging from audio \cite{Wang_2025_ICCV, Ki_2025_ICCV} and image generation \cite{Stoica_2025_ICCV, Schusterbauer_2025_CVPR} to forecasting \cite{Fu_2025_CVPR,rohbeck2025modeling}, and robotic control and planning \cite{Wang_2025_CVPR,black2025pi, Xing_2025_CVPR}.

\paragraph{Guidance Methods}
In many generative tasks, it is essential to control the sampling process to meet certain criteria, such as generating new images given specific text prompts \cite{li2024tuning}. Guidance methods are effective mechanisms for controlling the sampling process and can be applied in different ways. 
One way is to apply guidance using a pretrained classifier to condition the sampling process \cite{dastider2024apex,Yang_diffusion_2020,zheng2025diffusionbased,Lin_2025_CVPR_guidance}. This approach, however, limits the diversity of samples as the conditioning is based on a limited set of classes. In addition, such a method requires large-scale retraining if the criteria are changed. For instance, in the context of obstacle avoidance, over 500K training samples are required to capture various goal configurations and obstacle layouts to cover a simple workspace \cite{dastider2024apex}.

Training-free guidance methods \cite{shen2024understanding} lower the computational burden by applying and optimizing a cost function directly at inference time, hence eliminating the need to retrain the generative model. These approaches have been widely applied to diffusion policies for robotic control applications in which responding to novel or unexpected situations is paramount \cite{carvalho2023motion,ye2025ra,feng2025language,pearce2023imitating,yang2025capecontextawarediffusionpolicy}. Despite the great success of training-free guidance methods, their direct application to flow-matching is not trivial. \citet{feng2025guidance} propose a comprehensive framework for the translation of diffusion guidance to flow-matching. This approach enables the adaptation of several guidance methods from diffusion to flow-matching and provides a strong baseline for our work. Nevertheless, these adaptations do not account for the training coupling. As we show in the following section, this omission can result in the velocity field being evaluated at points that lie far outside the coupling region, where the model has not been trained.

\paragraph{Base Methods}

Unlike in diffusion, the literature in pixel-space training-free guidance is still scarce for flow matching, making the number of relevant baselines limited. We chose to study two paradigms, namely Jacobian-based and Jacobian-free guidance by testing our method on \pigdm and $g^{\text{sim-inv-A}}$ \cite{feng2025guidance}, we refer to the latter as \pigdmp. 

\pigdm is a direct adaptation of the pseudoinverse-based guidance  \cite{song2023pseudoinverse} to the flow-matching setting. In noiseless inverse problems, the guidance takes the form,
\[
\Pi \text{GDM}(x,y,H) = - \lambda_t ((H^\dagger y - H^\dagger H\hat x_1)^T\frac{\partial \hat x_1}{\partial x_t})^T
\]
where $\lambda_t$ denotes the guidance schedule, $H$ is the degradation operator (e.g., masking for inpainting, blurring for deblurring, or downsampling for super-resolution), and $y = H x^*$ is the degraded observation of the target image $x^*$. The operator $H^\dagger = H^\top (H H^\top)^{-1}$ corresponds to the Moore–Penrose pseudoinverse of $H$. \pigdmp follows that exact formulation but approximates $\frac{\partial \hat x_1}{\partial x_t}$ as the identity.

Finally, since no trivial adjoint operator $H^T$ is available for collision avoidance, we instead apply guidance by directly using $g^{cov-A}$ which minimizes an energy function, $$g_{\text{collision}}=-\lambda_t\nabla_{\widehat x_1}||\widehat x_1-y||$$
with $y$ the target generated by our collision avoidance pipeline. We provide a discussion about our choice of baseline in Appendix \ref{appendix:baselines}.

\section{A Short Theory of Coupled Guidance}

\subsection{Notations}
We consider a data space $\mathcal X := \mathbb R^d$, viewed as an affine space with underlying vector space $\overrightarrow{\mathcal X}$. It is equipped with its Borel $\sigma$-algebra and the Lebesgue measure~$\lambda$. We denote measures by Greek letters $\mu,\nu,\pi$. A \emph{distribution} is a probability measure on $\mathcal X$, and a \emph{coupling} of two distributions $\mu$ and~$\nu$ is a distribution $\pi$ on $\mathcal X\times\mathcal X$ such that its marginals are $\mu$ and $\nu$ respectively.

\subsection{Flow-Matching Training and Inference}
A flow-matching model consists of a tuple $(\mu_1,\mu_0,\pi,v^\theta)$ where $\mu_1$ is the data or target distribution, $\mu_0$ is a latent distribution (typically Gaussian), $\pi$ is a coupling between them, and $v^\theta : \mathcal X\times[0,1]\to\overrightarrow{\mathcal X}$ is a learnable vector~field.

The model is trained using a linear flow-matching objective,
$$\mathcal L(\theta)
    = \mathbb E_{(x_0,x_1)\sim\pi} \int_0^1 
       \| v^\theta_t(x_t) - v_t(x_t) \|^2\, dt  
    = \mathbb E \int_0^1 
       \| v^\theta_t(x_t) - (x_1 - x_0)\|^2 dt + \mathrm{Cte}, \nonumber
$$
where $x_t = (1-t)x_0 + t x_1$ and $v_t(x_t) = \mathbb E[x_1 - x_0 \mid x_t]$.

At inference time, one solves the ODE $\dot x_t = v^\theta(x_t,t)$, typically via a first-order Euler scheme:
\begin{equation}\label{eq:update_euler_scheme}
    x_{t+\delta t} = x_t + v^\theta_t(x_t)\,\delta t,\quad
\end{equation}
We can extend this Euler step for a one step prediction:
\begin{equation}\label{eq:x1_pred}
    \widehat x_1 = x_t + (1-t)v^\theta(x_t,t)
\end{equation}

\subsection{From Guided Flow to Coupled Guidance}
In practice, guidance takes the form of a task-specific function $\mathcal G : \mathcal X  \to \overrightarrow{\mathcal X},$
which acts as a correction to the predicted clean sample. For clarity, we omit conditioning variables (e.g., observation~$o$).

By analogy with universal diffusion guidance~\cite{bansal2023universal}, each inference step may be interpreted as (i) predicting a denoised endpoint $\widehat x_1$, followed by (ii) interpolating toward it,
\begin{align}
\label{eq:update_naive_interpolation}
x_{t+\delta t} &= \frac{1-t-\delta t}{1-t}\,x_t
               + \frac{\delta t}{1-t}\,\widehat x_1.
\end{align}
Naive guidance first modifies the predicted endpoint,
\begin{align}
\widehat x_1^g &= \widehat x_1 + \alpha_t \mathcal G(\widehat x_1), \\
x_{t+\delta t}^{g,\mathrm{naive}}
    &= \frac{1-t-\delta t}{1-t}x_t
     + \frac{\delta t}{1-t}\,\widehat x_1^g,
\label{eq:update_naive_guidance}
\end{align}
but this formulation obscures the underlying coupling structure on which flow-matching is trained. We rewrite inference as a coupling update by defining the reconstructed endpoints as

\begin{equation}
\label{eq:x0hat}
\widehat x_0 = x_t - t\,v^\theta(x_t,t)\quad \text{and}  \quad \widehat x_1 = x_t + (1-t)v^\theta(x_t,t)
\end{equation}

Then the Euler update \eqref{eq:update_euler_scheme} is equivalent to
\begin{equation}\label{eq:update_coupling}
    x_{t+\delta t} = (1-t-\delta t)\widehat x_0 + (t+\delta t)\widehat x_1,
\end{equation}
a linear interpolation between the latent and target endpoints. When applying naive guidance, the target endpoint becomes $\widehat x_1^g$, and the latent endpoint must be modified to
\[
\widehat x_0^{g,\mathrm{naive}}
    := x_t - t\,\frac{\widehat x_1^g - x_t}{1-t}.
\]
Thus naive guidance implicitly induces a \emph{latent guidance}
\[
\widehat x_0^{g,\mathrm{naive}} - \widehat x_0
    = -\frac{\alpha_t t}{1-t}\,\mathcal G(\widehat x_1).
\]

\paragraph{General coupled guidance.}
Motivated by this observation, we introduce \textbf{Coupled Guidance}, the pair $(\mathcal G^0,\mathcal G^1)$ and the following update rules in the general form:
\begin{align}
\widehat x_0^g &= \widehat x_0 + \mathcal G^0(\widehat x_0,\widehat x_1,t), 
\label{eq:latent_guide}\\
\widehat x_1^g &= \widehat x_1 + \mathcal G^1(\widehat x_0,\widehat x_1,t), 
\label{eq:target_guide}\\
x_{t+\delta t}^g
    &= (1-t-\delta t)\widehat x_0^g
     + (t+\delta t)\widehat x_1^g.
\label{eq:update_coupling_guidance_coguidance}
\end{align}

For naive guidance, we have

\begin{equation}
\mathcal G^{1,\mathrm{naive}} = \alpha_t \mathcal G(\widehat x_1) \quad \text{and} \quad
\mathcal G^{0,\mathrm{naive}} = -\frac{\alpha_t t}{1-t}\mathcal G(\widehat x_1).
\end{equation}

\subsection{Training Coupling versus Guided Coupling}

A coupled guidance induces a guided interpolant distribution
$
(\widehat x_0^g,\widehat x_1^g) \sim (\pi^g \mid o,t),
$
which depends on the observation~$o$ and timestep~$t$. We continue by characterizing how deviations between $\pi^g$ and the training coupling~$\pi$ affect flow-matching accuracy, and clarify why naive guidance degrades performance even if  $\Pi$GDM mitigates some—but not all—failures.

\paragraph{Assumptions.}
Following~\citet{benton2023error}, we assume:
\begin{itemize}
    \item the flows induced by $v_t$ and $v_t^\theta$ exist, and are unique and smooth;
    \item $v^\theta$ is differentiable in $(t,x)$ and Lipschitz in $x$;
    \item the guided update admits a continuous-time limit
          $x_{t+\delta t}^g = x_t^g + \delta t\,v_t^g + o(\delta t)$;
    \item $v_t \mapsto v_t^g$ is Lipschitz with constant $L_g$.
\end{itemize}
These hold, in particular, for naive guidance provided 
$\alpha_t\|\mathcal G\| = O(1-t)$ as $t\to 1$.
We denote by $\mu_1^{g(o)}$ and $\mu_1^{\theta,g(o)}$ the final guided distribution for the theoretical minimizer $v_t$ of the loss and the trained vector field $v_t^\theta$, respectively.

\begin{theorem}\label{theo:ood_bound}
Let $(\mu_1,\mu_0,\pi,v^\theta)$ be a flow-matching model and let $g$ be any coupled guidance. Under the above assumptions,
\begin{equation}
\mathbb E_o\,
\mathcal W\!\left(
    \mu_1^{g(o)},\,
    \mu_1^{\theta,g(o)}
\right)
\le
L_g L_\theta 
\,\mathcal L^{1/2}(\theta)\,
\biggl\|
    \frac{d\pi^g}{d\pi}
\biggr\|_{L^\infty(\pi)},
\end{equation}
where $\pi^g$ is the guided coupling distribution,  
$L_\theta = \exp\!\left(\int_0^1 L_{\theta,t}dt\right)$,  
and $L_{\theta,t}$ is the Lipschitz constant of $v_t^\theta$; See Appendix~\ref{appendix:Th1_proof} for the proof.
\end{theorem}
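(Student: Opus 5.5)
The plan is to follow the Benton et al.\ error analysis for flow-matching ODEs, run along the \emph{guided} dynamics, and then move the resulting error from the guided distribution back to the training distribution by a change of measure.

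\textbf{Step 1: reduce to a trajectory-level $L^2$ error.} Fix an observation $o$. Couple the two guided ODEs $\dot x_t = v^g_t(x_t)$ and $\dot x_t = v^{\theta,g}_t(x_t)$ through the same initial latent $x_0\sim\mu_0$. Here $v^{g}$ and $v^{\theta,g}$ are the continuous-time limits of the coupled guidance update built from $v$ and $v^\theta$ respectively. This synchronous coupling gives
\[
\mathcal W(\mu_1^{g(o)},\mu_1^{\theta,g(o)})\le \mathbb E\|X_1-X_1^\theta\|.
\]
The quantity $e_t=\|X_t-X^\theta_t\|$ then satisfies
\[
\dot e_t\le L_{\theta,t}\,e_t+\|v^{g}_t(X_t)-v^{\theta,g}_t(X_t)\|.
\]
To get this, I split $v^g_t(X_t)-v^{\theta,g}_t(X^\theta_t)$ by adding and subtracting $v^{\theta,g}_t(X_t)$. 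The Lipschitz constant used for the second term is that of $v^\theta_t$; absorbing any extra guidance Lipschitz factor is an assumption I would impose or fold into $L_{\theta,t}$. Grönwall's lemma gives
\[
e_1\le L_\theta\int_0^1\|v^g_t(X_t)-v^{\theta,g}_t(X_t)\|\,dt .
\]

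\textbf{Step 2: strip the guidance.} The map $v_t\mapsto v^g_t$ is $L_g$-Lipschitz, so
\[
\|v^g_t-v^{\theta,g}_t\|\le L_g\|v_t-v^\theta_t\|
\]
pointwise. The integrand is then evaluated at points $X_t$ distributed as $(1-t)\widehat x_0^g+t\widehat x_1^g$ with $(\widehat x_0^g,\widehat x_1^g)\sim\pi^g(\cdot\mid o,t)$. This is exactly the interpolant law under the guided coupling, by the coupling update \eqref{eq:update_coupling_guidance_coguidance}. Identifying the marginal of the guided exact flow with this interpolant law is the step I expect to be the main obstacle. It holds directly for the Euler/coupling view and needs the continuous-time-limit assumption to be justified.

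\textbf{Step 3: change of measure and Cauchy--Schwarz.} Write $x_t=(1-t)x_0+tx_1$. Then
\[
\mathbb E_o\int_0^1\mathbb E_{\pi^g}\|v_t-v_t^\theta\|(x_t)\,dt=\int_0^1\mathbb E_{\pi}\Bigl[\tfrac{d\pi^g}{d\pi}\,\|v_t-v^\theta_t\|(x_t)\Bigr]dt .
\]
This is bounded by $\|d\pi^g/d\pi\|_{L^\infty(\pi)}\int_0^1\mathbb E_\pi\|v_t-v_t^\theta\|\,dt$. By Jensen/Cauchy--Schwarz in $(t,\pi)$, the last integral is at most
\[
\Bigl(\int_0^1\mathbb E_\pi\|v_t-v^\theta_t\|^2dt\Bigr)^{1/2}=\mathcal L^{1/2}(\theta).
\]
This uses the first form of the loss, with $v_t$ the conditional-expectation minimizer. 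Multiplying by the constants $L_g$ and $L_\theta$ gives the claimed bound.
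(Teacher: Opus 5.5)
Your proposal is correct and is essentially the paper's argument: compare the two guided flows started from a common latent, evaluate the velocity error along the exact guided trajectory, identify its law with the guided interpolant under $\pi^g$, change measure to $\pi$, apply the $L_g$-Lipschitz property of the guidance map, and finish with Cauchy--Schwarz to obtain $\mathcal L^{1/2}(\theta)$. The only deviation is in the stability estimate: you use a direct Gr\"onwall bound on $\|X_t-X_t^\theta\|$, whereas the paper uses the Alekseev--Gr\"obner formula plus a Gr\"onwall bound on $\nabla_x Y$; both give the same constant, and yours needs only Lipschitz continuity rather than differentiability. The point you flag, that the constant actually needed is the Lipschitz constant of $v^{\theta,g}_t$ rather than of $v^\theta_t$, is glossed over in the same way in the paper's proof.
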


\paragraph{Interpretation.}
The bound highlights the central difficulty of naive guidance:  
 the factor $\frac{d\pi^g}{d\pi}$ measures how far the guided coupling deviates from the training coupling. Under the simplifying assumptions that the couplings are independent, $\pi=\mu_0 \otimes \mu_1$ and $\pi^g = \pi^g_0\otimes \pi_1^g$, the density ratio decomposes into
\[
\frac{d\pi^g}{d\pi}
\quad=\quad
\frac{d\pi_1^g}{d\mu_1}
\cdot
\frac{d\pi_0^g}{d\mu_0}.
\]

The target marginal $\pi_1^g$ depends on how well the \emph{target guidance} $\mathcal G^1$ preserves the data manifold.  
Under the manifold hypothesis~\cite{fefferman2016testing}, $\mu_1$ is highly anisotropic;  
$\Pi$GDM improves stability by projecting the guidance direction onto dominant tangent directions of~$\mu_1$, thus controlling the data-side marginal density $\frac{d\pi_1^g}{d\mu_1}$. However, computing this projection requires $\frac{d\hat x_1}{dx_t}$, which can become ill-behaved when guided trajectories deviate from the training coupling. Using coupled guidance controls the latent-side density, keeping the inference stable.

The latent marginal $\pi_0^g$ depends on the \emph{latent guidance} $\mathcal G^0$.  
For a Gaussian prior, naive guidance yields
\[
\frac{d\pi_0^g}{d\mu_0}
\approx
\mathbb E_{x_1\sim\mu_1}
\!\int_{0}^1\!
\exp\!\Bigl(
    \bigl\|\tfrac{\alpha_t}{1-t}\mathcal G(x_1)\bigr\|^2
\Bigr)\,dt,
\]
which can explode for moderate guidance strength, causing severe out-of-distribution drift of intermediate $x_t^g$.

\paragraph{Conclusion.}
Naive guidance evaluates the learned vector field $v^\theta$ at intermediate points $x_t^g$ that may lie far outside the coupling used during training, leading to large density ratios and degraded inference.  
This motivates designing guidance rules that preserve the latent--data coupling—precisely the principle behind our Coupled Guidance framework.

\section{Method} \label{sec:method}

\subsection{Coupled Guidance Algorithm}
Our method directly instantiates the coupled guidance update rules 
\eqref{eq:latent_guide}, \eqref{eq:target_guide}, 
\eqref{eq:update_coupling_guidance_coguidance}.  
We use the same target-guidance as standard guided flow, but modify the latent-guidance term to better preserve the training coupling. Concretely, we define,
\begin{align}
\mathcal G^{1,\alpha}(\widehat x_0, \widehat x_1, t) 
    &= \alpha\, \mathcal G(\widehat x_1), \\
\mathcal G^{0,\beta}(\widehat x_0, \widehat x_1, t) 
    &= -\frac{\beta t}{1-t}\, \mathcal G(\widehat x_1),
\end{align}
where $(\alpha,\beta)$ are hyperparameters controlling data-side and latent-side guidance.  
Algorithm~\ref{alg:method} summarizes the full sampling procedure.

This procedure introduces only two hyperparameters and retains the same computational complexity as naively guided flow-matching. Note that $\alpha=1$ will almost always be the sensible choice, as scaling of data-guidance is in practice already included in the guidance operator itself. Only $\beta$ needs to be chosen between naive ($\beta=1$) and coupled guidance ($\beta \in [-1,0]$). We see very little empirical benefit to tuning $\beta$ carefully, and would recommend simply setting $\beta \in \{0, -1\}$ as a starting point.

Intuitively, $\beta$ controls both the magnitude and sign of the latent correction relative to the data-side guidance $\mathcal{G}$, acting as an affine parameter along ($\hat{x}_0$,$\hat{x}_0^g$), with a clear geometric interpretation:

\begin{itemize}
    \item 
$\beta = 1$: the latent interpolant is $\hat{x}_0^g$, the correction is reversed relative to the data guidance, actively opposing it.
    \item 
$\beta = 0$: the latent interpolant remains $\hat{x}_0$ (no latent correction).
    \item 
$\beta = -1$: the latent interpolant is set to the symmetric of $\hat{x}_0^g$ with respect to $\hat{x}_0$, i.e. it is pushed in the same direction as the data guidance (since $-\frac{-1}{1-t}\mathcal{G} = +\frac{1}{1-t}\mathcal{G}$).
\end{itemize}

As established in \citet[Theorem 3.5]{liu2022flow}, flow-matching training minimizes transport cost, which causes the learned vector field to progressively align latent and data interpolants along trajectories. This means the learned data-latent coupling tends to co-orient $\hat{x}_0$ and $\hat{x}_1$ along the flow. Applying $\beta = 1$ pushes the latent interpolant against the data guidance direction, producing a data-latent coupling that diverges from the learned one. By contrast, $\beta \leq 0$ aligns the latent correction with the data guidance, keeping the guided coupling closer to the learned coupling.

\begin{figure}[t]
  \begin{minipage}[t]{0.48\textwidth}
    \vspace{-1em}
    \begin{algorithm}[H]
        \footnotesize
        \caption{Coupled Guidance Sampling}
        \label{alg:method}
        \begin{algorithmic}[1]
        \Require Number of steps $T$, velocity field $v^\theta_t(x)$, guidance function $\mathcal{G}$, coefficients $(\alpha,\beta)$, step size $\delta t$.
        \State Sample initial latent $x_0^g \sim \mu_0$ and set $t \gets 0$
        \For{$i = 0..T$}
            \State $v_t \gets v^\theta(x_t^g, t)$
            \State $\widehat{x}_1 \gets x_t^g + (1-t)v_t$
            \State $\widehat{x}_0 \gets x_t^g - t v_t$
            \State $g \gets \mathcal G(\widehat{x}_1)$
            \State $\widehat{x}_1^g \gets \widehat{x}_1 + \alpha g$
            \State $\widehat{x}_0^g \gets \widehat{x}_0 - \frac{\beta t}{1-t} g$
            \State $t \gets t + \delta t$
            \State $x_t^g \gets (1-t)\widehat{x}_0^g + t \widehat{x}_1^g$
        \EndFor
        \State \Return $x_t^g$
        \vspace{0.3em}
        
        \end{algorithmic}
    \end{algorithm}
  \end{minipage}
  \hfill
  \begin{minipage}[t]{0.48\textwidth}
        \centering
        \captionof{table}{Comparison of naive vs. coupled guidance across methods and tasks on ImageNet-128 (minibatch OT).}
        \label{tab:results_imagenet}
        \resizebox{\linewidth}{!}{%
        \begin{tabular}{llcccc}
        \toprule
         & & \multicolumn{2}{c}{\textbf{Naive}} & \multicolumn{2}{c}{\textbf{Coupled}} \\
        \cmidrule(lr){3-4} \cmidrule(lr){5-6}
        \textbf{Task} & \textbf{Metric} & $\Pi$GDM & $\Pi$GDM+ & $\Pi$GDM & $\Pi$GDM+ \\
        \midrule
        \multirow{4}{*}{Inpainting}
          & FID$\downarrow$   & 18.3  & 10.3  & \textbf{8.5}   & 11.4 \\
          & LPIPS$\downarrow$ & 0.112 & \textbf{0.086} & \textbf{0.088} & 0.089 \\
          & PSNR$\uparrow$    & 18.3  & 20.6  & \textbf{23.5}  & 20.3 \\
          & SSIM$\uparrow$    & 0.903 & 0.911 & \textbf{0.916} & 0.908 \\
        \midrule
        \multirow{4}{*}{Deblurring}
          & FID$\downarrow$   & 70.5  & 14.3  & 22.2  & \textbf{13.2} \\
          & LPIPS$\downarrow$ & 0.471 & 0.197 & 0.264 & \textbf{0.184} \\
          & PSNR$\uparrow$    & 19.0  & 23.8  & 23.5  & \textbf{24.3} \\
          & SSIM$\uparrow$    & 0.449 & 0.753 & 0.711 & \textbf{0.769} \\
        \midrule
        \multirow{4}{*}{Super-res.}
          & FID$\downarrow$   & 49.7  & 46.0  & \textbf{8.5}   & 50.4 \\
          & LPIPS$\downarrow$ & 0.335 & 0.396 & \textbf{0.142} & 0.413 \\
          & PSNR$\uparrow$    & 19.0  & 17.5  & \textbf{24.0}  & 17.3 \\
          & SSIM$\uparrow$    & 0.645 & 0.477 & \textbf{0.816} & 0.452 \\
        \bottomrule
        \end{tabular}%
        }
  \end{minipage}
  \vspace{-15pt}
\end{figure}
\vspace{5pt}
\section{Experiments}\label{sec:experiments}

\subsection{Collision Avoidance}

Following \citet{chi2025diffusion,clemente2025two}, we use a 65M-parameter U-Net \cite{ronneberger2015u} to parameterize our flow-matching policy, trained on the ManiSkill3 \cite{tao2024maniskill3} Pick Cube environment and its associated dataset. The action space is 9-dimensional, corresponding to the robot joint positions. 
\begin{wraptable}{l}{0.26\linewidth}
  \centering 
  \caption{Collision avoidance success rates (\%) for all $(\alpha,\beta)$ couples, over 256 environment seeds.}
  \label{tab:coll_avoidance}
  \footnotesize
  \begin{tabular}{lccc}
  \toprule
  \textbf{$\alpha\backslash\beta$} & -1 & 0 & 1 \\
  \midrule
  0 & 73 & 21 & 17 \\
  1 & \textbf{81} & 77 & 36 \\
  \bottomrule
  \end{tabular}
\end{wraptable}
The model is trained to predict an action horizon of length $h_{\mathcal A}=24$—resulting in an output dimension of $24\times9=216$—conditioned on an observation horizon of $h_{\mathcal O}=2$ past environment states. On the base environment, the learned policy solves the task almost perfectly, achieving a success rate of 99.8\% (see Appendix \ref{appendix:robot_experiments} for more details).
We then evaluate the same policy in a modified environment where an obstacle is introduced between the initial cube position and the target, and apply guidance to enable collision avoidance. The guidance function is defined using a Euclidean signed distance field (ESDF), computed with cuRobo \cite{sundaralingam2023curoboparallelizedcollisionfreeminimumjerk}, and acts directly on the predicted joint trajectories to penalize collisions. This form of guidance is inherently noisy, as it perturbs each joint state independently across the prediction horizon, resulting in a large deviation between the guided and training couplings, reflected by a high density ratio $d\pi_1^g / d\mu_1$.

Without guidance ($\alpha=0$, $\beta=0$), the policy’s performance drops to approximately 21\%, as trajectories frequently collide with the obstacle. In this setting, \pigdm is not directly applicable, as it requires access to an adjoint or degradation operator, which is unavailable for this task. We therefore apply guidance by directly minimizing the ESDF-based energy. Under naive guidance ($\alpha=1$, $\beta=1$), performance only improves up to 36\%. In contrast, coupled guidance consistently yields strong performance despite the noisy nature of the guidance signal, achieving up to 81\% success rate with $\alpha=1$ and $\beta=-1$. Results for different tested $(\alpha,\beta)$ configurations are reported in Table~\ref{tab:coll_avoidance}.

\subsection{Robotic Control as an Inpainting Task}

\begin{figure*}[t]
    \centering
    \includegraphics[width=0.19\linewidth]{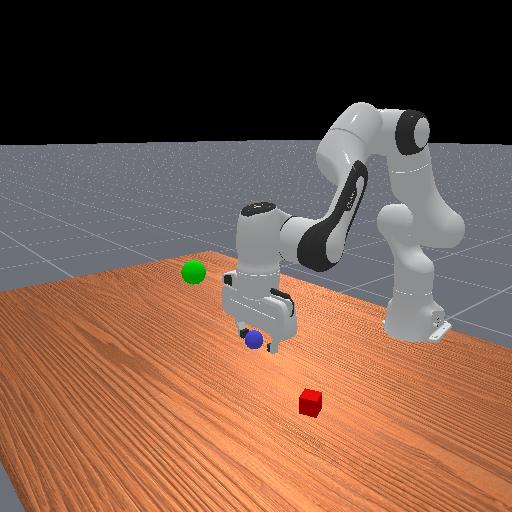}
    \includegraphics[width=0.19\linewidth]{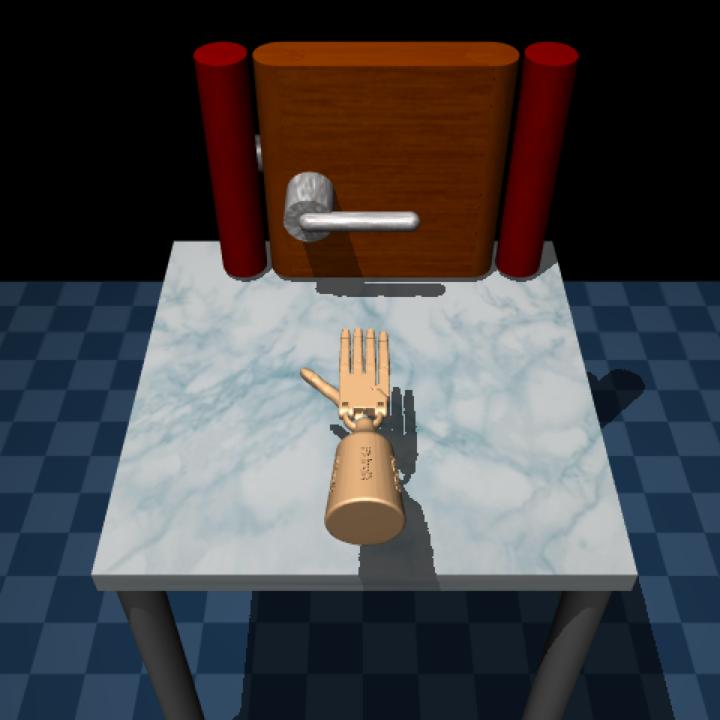}
    \includegraphics[width=0.19\linewidth]{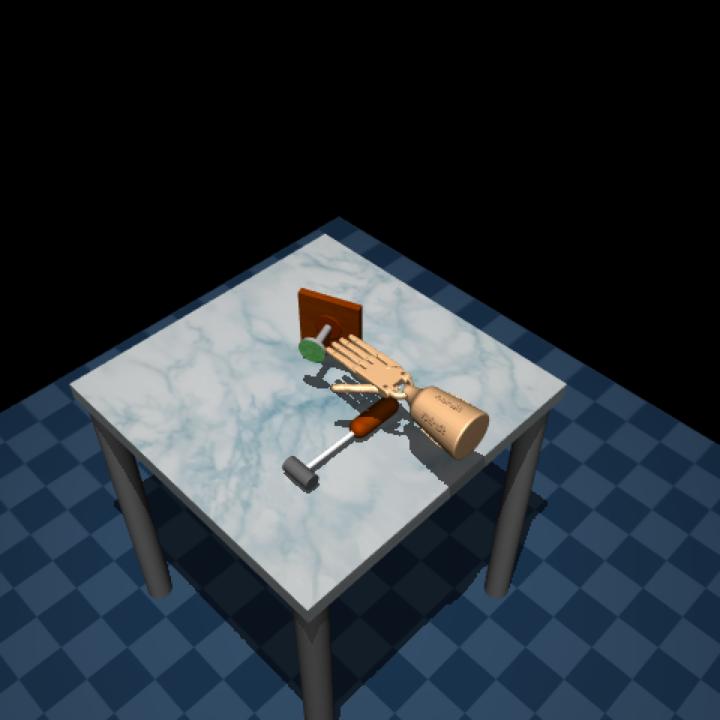}
    \includegraphics[width=0.19\linewidth]{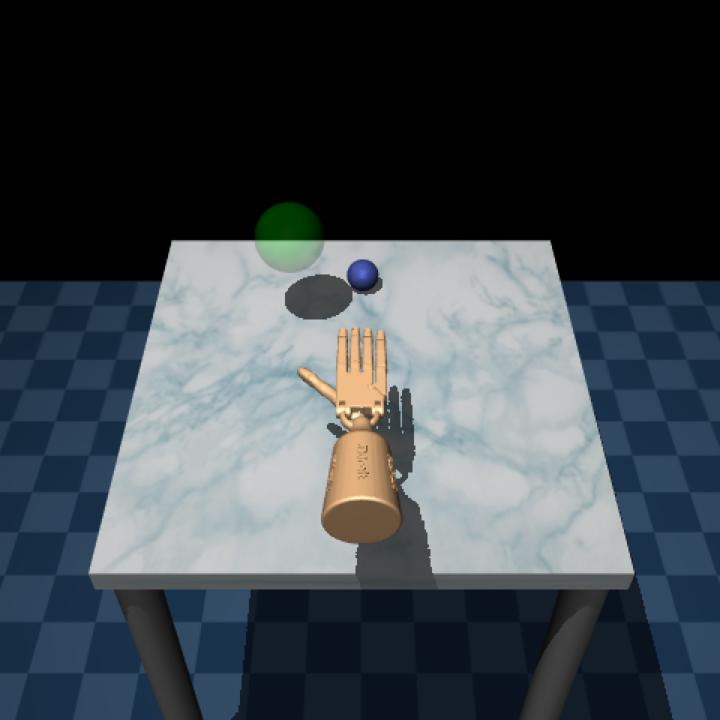}
    \includegraphics[width=0.19\linewidth]{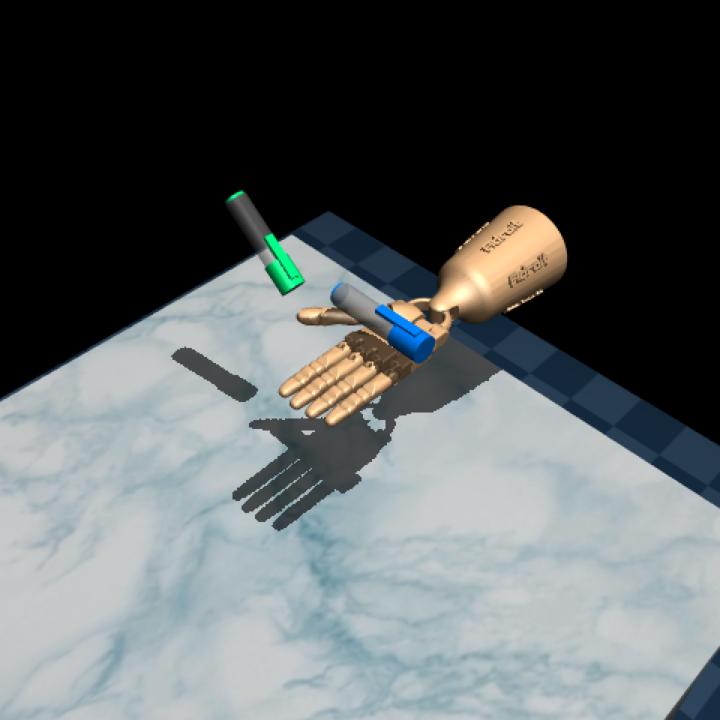}
    \caption{Experimental setups for robotics tasks. From left to right: Maniskill PickCube, Adroit Door, Hammer, Relocate, and Pen.}  %
    \label{fig:robotics_images}
    \vspace{-15pt}
\end{figure*}

We emulate a two-level hierarchical motion controller in which a high-level planner provides partial information about future trajectories to a low-level controller, thereby turning control into an inpainting problem. To this end, we train two flow-matching models per task. The high-level planner is a \emph{conditioned} model that, given an observation history, predicts a horizon of $h=24$ future state–action pairs $(o_t,a_t)_{t=1}^h$. The low-level controller is trained on the same dataset but remains \emph{unconditioned}.

At test time, the controller receives sparse supervision from the planner and its task is to infer a complete action sequence consistent with this partial trajectory information. Intuitively, the planner acts as a world-aware module that provides intermittent \textit{waypoints} in the trajectory space, while the controller acts as an action expert that fills in the missing degrees of freedom.

We evaluate robustness to guidance sparsity under both naive and coupled guidance on the four Adroit dexterous manipulation tasks \cite{rajeswaran2017learning, fu_2020_adroit}: Door (latch opening), Pen (reorientation to a target angle), Hammer (nail hammering), and Relocate (object relocation) (Figure~\ref{fig:robotics_images}). Different seeds correspond to variations in task configuration (e.g., door pose, target angle, nail position, and object/goal locations). As \pigdm and \pigdmp behave similarly, we present results for \pigdm in Figure~\ref{fig:sparsity} and include additional results in Appendix~\ref{appendix:robot_results}. For all runs we used 300 seeds.

As shown in Figure \ref{fig:sparsity}, coupled guidance is substantially more robust to sparse guidance than naive guidance, without requiring task-specific tuning. Even when sweeping a wide range of guidance scales, naive guidance degrades sharply once 20\% or fewer dimensions are guided, whereas coupled guidance maintains near-fully-guided performance down to approximately 5\% guided dimensions. Naive guidance can perform well with careful tuning, but requires different guidance scales across sparsity regions, whereas coupled guidance remains stable across all sparsity levels without tuning. Additional results  are provided in Appendix~\ref{appendix:robot_results}.

\begin{wrapfigure}{r}{0.72\linewidth}
  \vspace{-5pt}
  \centering
  \includegraphics[width=0.49\linewidth]{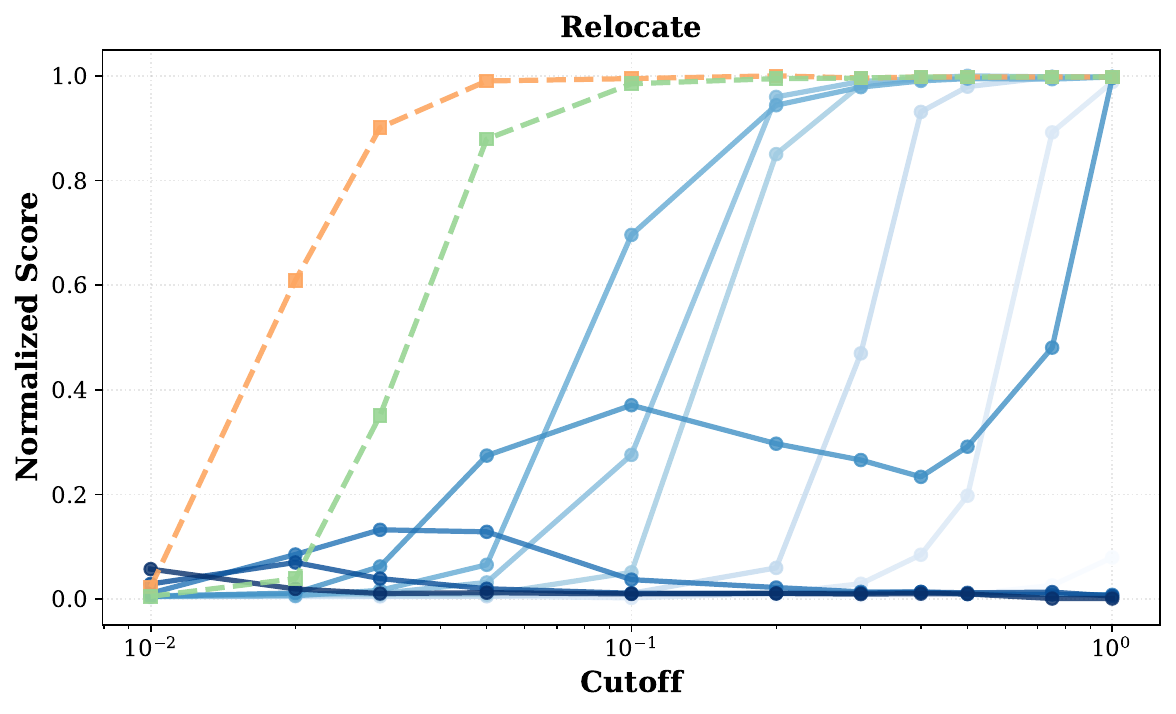}
  \includegraphics[width=0.49\linewidth]{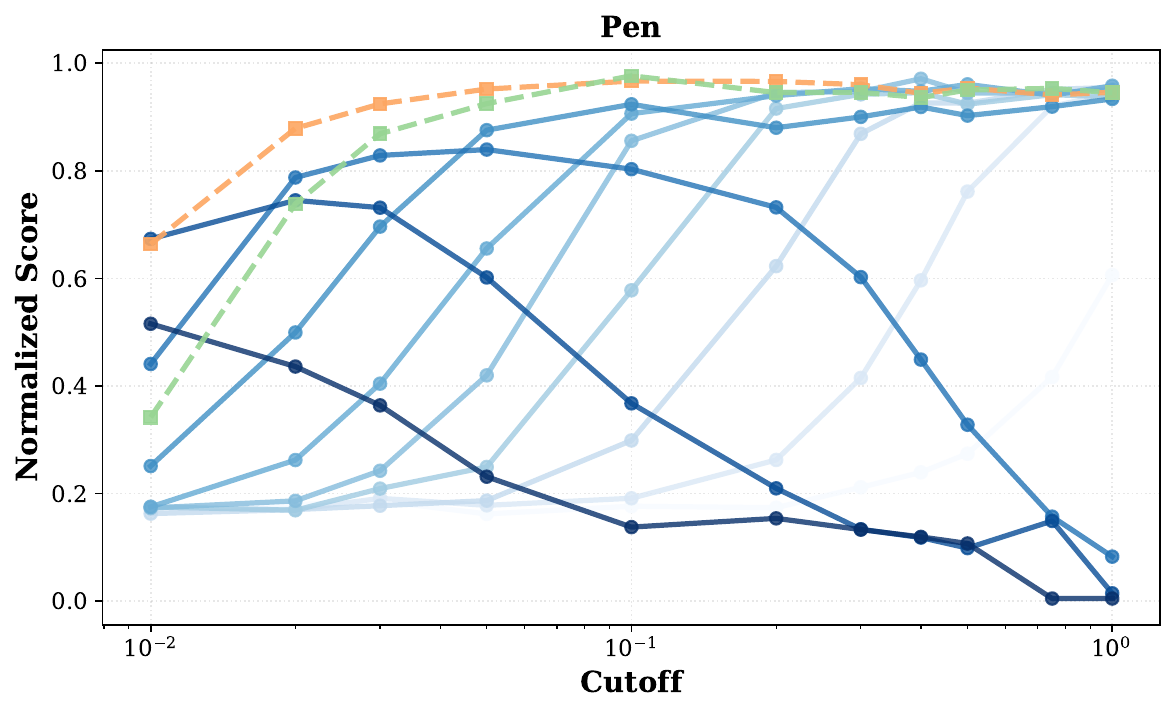}
  \includegraphics[width=\linewidth]{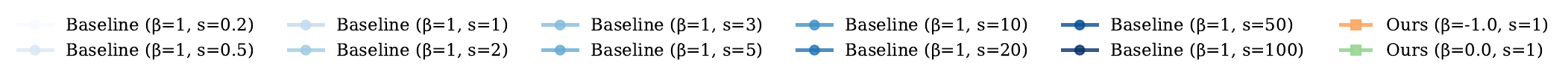}
  \caption{Performance of \pigdm on Adroit tasks given different sparsity cutoffs. Runs using coupled guidance are used directly without a unit guidance scale $s=1$. The blue lines each represent the performance of naive guidance for different guidance scalings $s$.}
  \label{fig:sparsity}
  \vspace{-9pt}
\end{wrapfigure}

Since inpainting-based guidance has recently been adopted to improve trajectory consistency in generalist robotic policies \cite{black2025realtimeexecutionactionchunking}, methods such as \pigdm\ have clear relevance to real-world robotic control. The increased stability provided by coupled guidance directly benefits these applications by reducing sensitivity to sparse or imperfect guidance, enabling more reliable deployment without overhead.

\subsection{Image Generation}
\begin{wrapfigure}{r}{0.4\textwidth}
    \vspace{-20pt}
    \centering
    \includegraphics[width=0.9\linewidth]{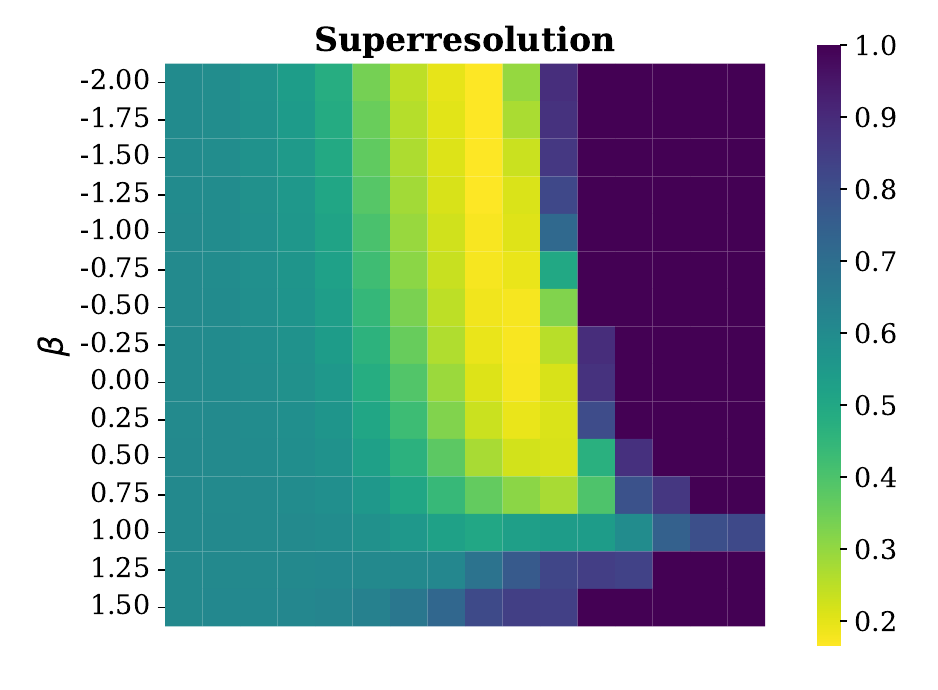}
    \includegraphics[width=0.9\linewidth]{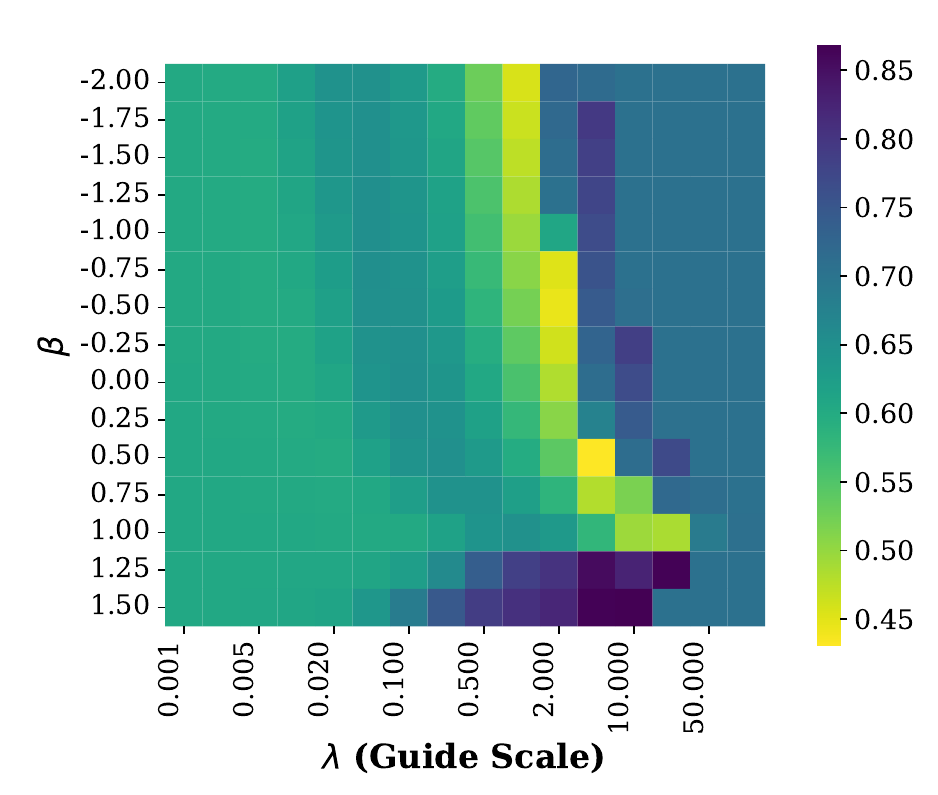}
            \caption{LPIPS scores for \pigdm, (first row) and \pigdmp (second row) on super-resolution}
    \label{fig:heatmaps}
\end{wrapfigure}
For the image generation tasks, we study the behavior of \pigdm and \pigdmp under naive and coupled guidance on box inpainting, deblurring and factor 4 super-resolution. We train a U-Net model from scratch on the CelebA-HQ-256 and ImageNet-128 datasets \cite{liu2015faceattributes}  (see Appendix \ref{appendix:image_experiments} for more details). All runs use the same base flow-matching model and are compared via 4 image quality metrics: FID \cite{heusel2017gans}, LPIPS \cite{zhang2018unreasonable}, PSNR and SSIM \cite{wang2004image}. To cover usual settings, we also train a minibatch OT model with a similar backbone for ImageNet-128. Given that the results are consistent for both datasets and coupling choices, we make general comments in the following section for conciseness. The full tables and plots of the results are provided in Table \ref{tab:results_imagenet} and Appendix \ref{appendix:images}.

 We evaluate our method on noiseless linear inverse problems:
 $$
y = Hx + \sigma_y, \quad \text{with} \quad \sigma_y = 0.
 $$
 
Although adding noise may make an accurate reconstruction more difficult, in practice it helps stabilize the guidance process, and $\sigma_y = 0$ is actually the regime in which training-free guidance is most prone to instability. It has been shown in the previous studies \cite{xia2026noiseadaptivediffusionsamplinginverse,bellchambers2025exploitingexactdenoisingposterior,tian2026stabilizingdiffusionposteriorsampling} that nonzero measurement noise $\sigma_y$ acts as an implicit regularizer, i.e. it smooths the posterior score so that the linear guidance term dominates for most of the trajectory \cite{bellchambers2025exploitingexactdenoisingposterior}, reduces the effective curvature of the likelihood gradient \cite{tian2026stabilizingdiffusionposteriorsampling}, and in \pigdm directly Tikhonov-regularizes the covariance inversion via the $\sigma_y^2$ I term. When $\sigma_y$ approaches 0, these stabilizing effects vanish: the guidance sharpens, instability increases, and approximation errors in the score are no longer masked by a forgiving posterior geometry. This makes the noiseless setting pertinent for our study as it highlights a failure mode where the stabilizing effects of coupled guidance are necessary.

\paragraph{Inpainting} 

In this setting,  naive \pigdmp exhibits stable and reliable performance across all metrics, whereas \pigdm shows worse performance, producing visible artifacts. This behavior, illustrated in Figure \ref{fig:heatmaps} is consistent with the analysis in Section~3, as \pigdm\ relies on the Jacobian $\partial \widehat{x}_1 / \partial x_t$, which becomes poorly conditioned when guided trajectories deviate from the training coupling. Coupled guidance has little effect on \pigdmp, whose stability primarily stems from the identity Jacobian approximation. However, it significantly stabilizes \pigdm by preserving the training coupling. As a result, coupled \pigdm consistently outperforms naive \pigdm and \pigdmp across all image quality metrics.

\paragraph{Super-resolution}

In this sparse guidance setting, both \pigdmp and \pigdm fail to produce meaningful reconstructions, yielding poor image quality across all evaluation metrics, as shown in Figure \ref{fig:heatmaps}. Like in the inpainting case, introducing coupled guidance has negligible impact on \pigdmp, whose performance is fundamentally constrained by the identity Jacobian approximation and thus poorly suited to this task. In contrast, coupled guidance substantially stabilizes \pigdm by mitigating excessive latent-side drift, enabling reliable sampling throughout the trajectory. This stabilization translates into a clear improvement in sample quality, with the coupled \pigdm variant consistently outperforming all other methods.

\paragraph{Deblurring} 

In this task, both \pigdmp\ and \pigdm\ benefit from coupled guidance. \pigdmp by default produces high-quality samples under naive guidance, indicating that the identity Jacobian approximation is well suited to this type of degradation. Still, coupled guidance further improves the reconstruction quality across all metrics in this setting. Coupled guidance likewise stabilizes \pigdm and enables reliable inference; however, even with this stabilization, its performance remains below that of \pigdmp. Overall, both methods benefit from coupled guidance in this setting even if this time \pigdmp\ emerges as the strongest method.  Additional illustrations for image quality metrics are provided in Appendix \ref{appendix:images}.

\begin{wrapfigure}{r}{0.6\textwidth}
    \centering
    \vspace{-0.5cm}
    \includegraphics[width=0.56\textwidth]{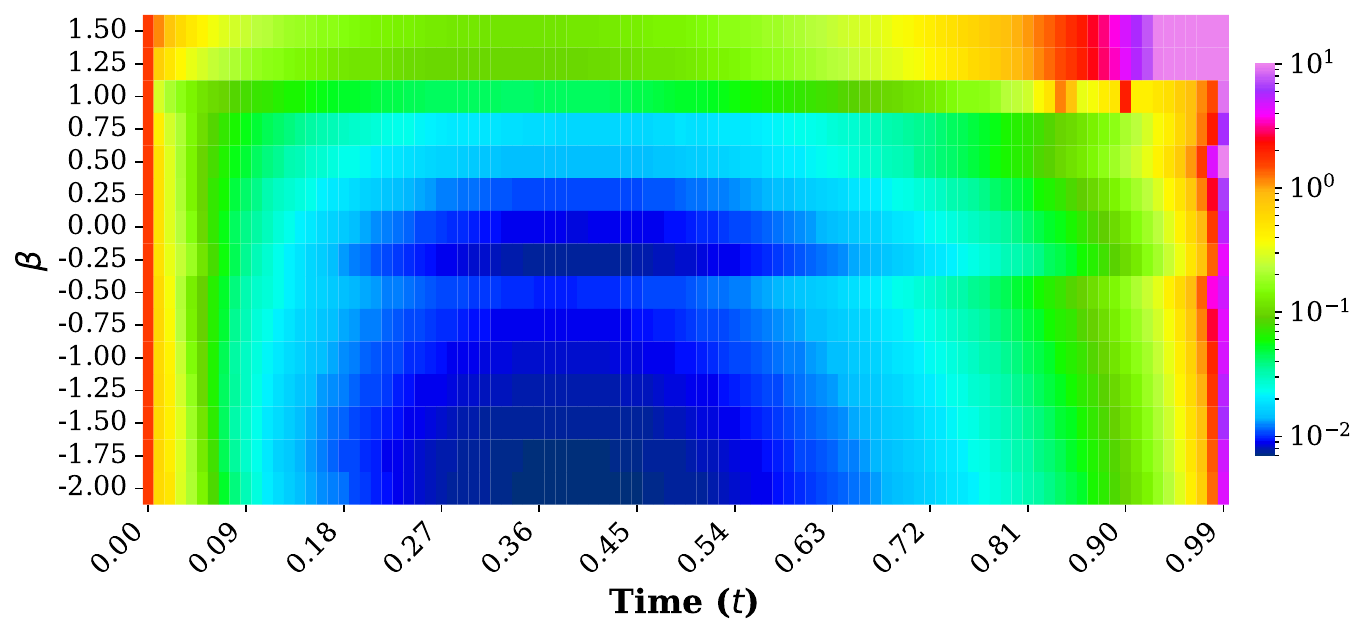}
    \caption{Evolution of $\partial \widehat x_1 /\partial x_t$'s contribution to the guidance norm for the CelebA-HQ-256 superresolution task through the \pigdm inference process.}
    \label{fig:divergence}
    \vspace{-0.3cm}
\end{wrapfigure}
\paragraph{Jacobian Influence} Figure~\ref{fig:divergence} illustrates how coupled guidance affects the contribution of the Jacobian-dependent term throughout the inference process. Note that to remain fair to the naive setting, we plot the best run for each $\beta$ after tuning. The term is initially large during the early steps, reflecting the expected role of the Jacobian in aligning the prediction with the inverse constraint. Crucially, when $\beta=1$ (naive guidance), the Jacobian contribution remains substantially larger in the stable zone, and the final destabilization occurs earlier,  making the final divergence more pronounced. Decreasing $\beta$ effectively dampens this amplification, extending the stable regime and delaying the time at which the term starts being ill-behaved. This observation provides direct empirical evidence that coupled guidance helps control the magnitude and timing of the Jacobian-induced corrections by preserving the training coupling, thereby preventing instability without degrading the guidance signal.

\section{Limitations}
\paragraph{Simplified theoretical setting.}
Our analysis is conducted under a simplified formulation using the unweighted quadratic flow-matching loss. A more general treatment would require extending the theory to weighted Bregman divergences, which cover the diversity of flow-matching objectives in the literature~\cite{lipman2024flow}. In addition, our theoretical study currently applies only to linear couplings; extending the analysis to more sophisticated geometries~\cite{chen2023flow} remains out of scope. Finally, Theorem~\ref{theo:ood_bound} relies on the existence of a continuous limit, whereas practical schedules we consider do not have such a limit.

\paragraph{Noiseless image inverse problems.}
Our work primarily targets noiseless inverse problems, where guidance-induced coupling mismatch is most pronounced and cannot be mitigated by observation noise. In noisy inverse problems, noise implicitly regularizes the posterior and stabilizes Jacobian-based guidance, partially alleviating the issues we address. Moreover, in such settings the effect of coupled guidance interacts with the residual scaling schedule and the noise variance term, making it less straightforward to differentiate improvements due to coupling preservation from those induced by  schedule design. As a result, the relative gains of coupled guidance may be less pronounced in noisy inverse tasks, although the proposed framework remains applicable. Still, since many applications require noiseless inverse guidance (photo editing, content super-resolution, general deblurring), we deem this restriction reasonable.

\section{Conclusion}

We proposed \emph{coupled guidance}, a training-free mechanism that exploits the geometric structure of flow-matching couplings to stabilize guided generation. By jointly leveraging information from both latent and data spaces, our approach achieves competitive or superior performance across robotics and image inverse problems while requiring minimal hyperparameter tuning and negligible computational overhead. 
Extensive experiments show that coupled guidance improves robustness to strongly out-of-distribution guidance signals and guidance sparsity, while improving the quality of generated samples across the board, over all tested dimensionalities and application domains. The observed behaviors validate our theoretical insights and demonstrate that coupled guidance offers a simple, scalable improvement to existing methods. We believe that this approach will extend naturally to more complex coupling structures and that it opens the door for more fine-grained latent control.

\bibliographystyle{plainnat}
\bibliography{main}

\appendix
\onecolumn
\section{Proof of the OOD bound} \label{appendix:Th1_proof}

\begin{theorem}\label{theo:ood_bound_}
    Let  $(\mu_1, \mu_0,\pi,v^\theta)$ be a flow-matching model and let $g$ be a guidance. Under assumptions above, with $A$ the Lipschitz constant of $v \mapsto v^g$,
    \begin{equation}
        \mathbb E_{o}\mathcal W(\mu_1^{g(o)}, \mu_1^{\theta,g(o)}) \leq A \exp\left(\int_{0}^1 L_t dt\right) \mathcal L^{1/2} (\theta)\left\| \frac{d \pi^g}{d\pi}\right\|_{L^\infty}
    \end{equation}
    where $ \pi^g$ is the distribution of coupling $(x_0^g,x_1^g)$ during inference.
\end{theorem}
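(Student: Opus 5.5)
The plan follows the stability argument of \citet{benton2023error}, applied to the guided dynamics and then completed by a change of measure from the guided coupling $\pi^g$ back to the training coupling $\pi$.

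Fix an observation $o$. The continuous-time limit assumption gives two guided velocity fields: $v_t^g$, obtained from the ideal field $v_t$, and $v_t^{\theta,g}$, obtained from $v^\theta_t$. They generate the flows whose terminal laws are $\mu_1^{g(o)}$ and $\mu_1^{\theta,g(o)}$. I will couple the two flows synchronously: start both from the same $x_0\sim\mu_0$ and follow the trajectories $x_t^g$ and $y_t$. The Wasserstein distance is then bounded by $\mathbb E\|x_1^g-y_1\|$. Setting $e_t=\|x_t^g-y_t\|$, we have
\[
\dot e_t \le \|v_t^{\theta,g}(y_t)-v_t^{\theta,g}(x_t^g)\| + \|v_t^{\theta,g}(x_t^g)-v_t^{g}(x_t^g)\|.
\]
The first term is at most $L_t e_t$, provided the guided model field inherits the Lipschitz constant $L_t$ of $v^\theta_t$ (up to a factor absorbed into $A$). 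The second term is at most $A\|v^\theta_t(x_t^g)-v_t(x_t^g)\|$, by the Lipschitz property of $v\mapsto v^g$. Since $e_0=0$, Grönwall's inequality gives
\[
\mathbb E\, e_1 \le A\exp\!\Big(\int_0^1 L_t\,dt\Big)\int_0^1 \mathbb E\,\|v^\theta_t(x_t^g)-v_t(x_t^g)\|\,dt .
\]

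Next I transfer the error term to training points. Along the guided inference, $x_t^g=(1-t)x_0^g+t x_1^g$ with $(x_0^g,x_1^g)\sim\pi^g(\cdot\mid o,t)$. This holds because the guided update \eqref{eq:update_coupling_guidance_coguidance} is exactly such an interpolation, and it is where the definition of $\pi^g$ enters. For each $t$, write $F_t(x_0,x_1)=\|v^\theta_t(x_t)-v_t(x_t)\|$ with $x_t$ the interpolant of $(x_0,x_1)$. Averaging over $o$, the error term becomes $\mathbb E_{\pi^g}F_t=\mathbb E_\pi\big[F_t\,\tfrac{d\pi^g}{d\pi}\big]$. This is bounded by $\|\tfrac{d\pi^g}{d\pi}\|_{L^\infty(\pi)}\mathbb E_\pi F_t$, and absolute continuity is assumed, since otherwise the bound is infinite and trivial. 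Cauchy--Schwarz in $(t,\pi)$ then gives
\[
\int_0^1 \mathbb E_\pi F_t\,dt\le \Big(\int_0^1\mathbb E_\pi F_t^2\,dt\Big)^{1/2}=\mathcal L^{1/2}(\theta).
\]
The equality uses the first form of the loss, the one written with $v_t(x_t)=\mathbb E[x_1-x_0\mid x_t]$. Combining the two displays gives the stated inequality with $A=L_g$ and $\exp(\int L_t)=L_\theta$.

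I expect the main obstacle to be the bookkeeping in the second step. The density ratio must be interpreted jointly in $(o,t)$ together with the pair $(x_0,x_1)$, so that a single $L^\infty$ bound can be pulled out after averaging over $o$. One must also check that the synchronous coupling really has interpolants distributed according to $\pi^g$. In addition, the Grönwall step requires a Lipschitz control of the guided model field in $x$, not only of $v^\theta$. I will handle this by assuming the guidance map preserves spatial Lipschitz constants up to $L_g$, absorbing any extra factor into $A$. The regularity assumptions, existence and uniqueness of smooth flows, justify differentiating $e_t$ almost everywhere.
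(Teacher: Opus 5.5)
Your proposal is correct and follows the paper's proof: it uses the same Benton-style stability estimate along the ideal guided flow, the change of measure from $\pi^g$ to $\pi$, the pointwise Lipschitz bound for $v\mapsto v^g$, and Cauchy--Schwarz against the first form of $\mathcal L(\theta)$, with only cosmetic differences (you run Gr\"onwall directly on $e_t=\|x_t^g-y_t\|$ instead of using the Alekseev--Gr\"obner formula plus a Gr\"onwall bound on $\nabla_x Y$, and you extract the $L^\infty$ norm before rather than after Cauchy--Schwarz). As in the paper, whose Jacobian bound actually involves $\nabla_x v^{g,\theta}$, you must read $L_t$ as the spatial Lipschitz constant of the guided model field $v_t^{\theta,g}$; your suggestion to absorb the discrepancy into $A$ does not work, because a constant such as $L_gL_t$ would produce $\exp\bigl(L_g\int_0^1 L_t\,dt\bigr)$, changing the exponent rather than adding a prefactor.
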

\begin{proof}
We follow the lines of the proof of Theorem 1 of \citet{benton2023error}. Define $X_{s,t}^x, Y_{s,t}^x$ be the respective flows of  $v$ and $v^\theta$ respectively: 
forall $x\in \mathbb R^d$ and all $0\leq s\leq t\leq 1$
\begin{eqnarray}
    X_{s,t}^x &=& x + \int_{s}^t v^g(X_{s,r}^x,r)dr~; \\
    Y_{s,t}^x &=& x + \int_{s}^t v^{g,\theta}(Y_{s,r}^x,r)dr. \\
\end{eqnarray}
By the Alekseev–Gröbner Theorem we have 
\begin{equation}
    X_{s,t}^x-Y_{s,t}^x = \int_{s}^t\left(\nabla_x  Y_{r,t}^{X_{s,t}^x}\right)(v^g(X_{s,t}^x,r)-v^{g,\theta}(X_{s,t}^x,r))dr.
\end{equation}
Furthermore, 
\begin{equation}
     \nabla_x Y_{s,t}^x= I + \int_{s}^t\left(\nabla_x v^{g,\theta}(Y_{r,t}^x,r)\right)\nabla_x  Y_{r,t}^x dr.
\end{equation}
It follows that by the Gronwall Lemma, 
\begin{equation}
      \|\nabla_x Y_{s,t}^x\|_{\mathrm{op}}\leq \exp\left(\int_{0}^1L_rdr\right):=C.
\end{equation}
Hence, we get:
\begin{eqnarray}
    \|X_{s,t}^x-Y_{s,t}^x\| &\leq& \int_{s}^t\left\|\nabla_x  Y_{r,t}^{X_{s,t}^x}\right\|_{\mathrm{op}}\\&&\quad \times \|v^g(X_{s,t}^x,r)-v^{g,\theta}(X_{s,t}^x,r)\|dr.\nonumber\\ 
     &\leq& C\int_{s}^t \|v^g(X_{s,t}^x,r)-v^{g,\theta}(X_{s,t}^x,r)\|dr.\nonumber
\end{eqnarray}
Taking the expectancy for $x\sim \mu_0 $ and observation $o$ we have 
\begin{eqnarray}
     \mathbb E_{o}\mathbb E_{x}\|X_{0,1}^x-Y_{0,1}^x\| &\leq& C \mathbb E_o\mathbb E_{x} \int_{0}^1 \left\| v^g(X_{0,r}^x,r)-v^{g,\theta}(X_{0,r}^x,r)  \right\|dr\nonumber \\
      \mathbb E_{o}\mathcal W_1(\mu_1^{g(o)}, \mu^{g(o),\theta}_1) &\leq& C \mathbb E_{o}\mathbb E_{t\sim \mathcal U, ( x_0, x_1)\sim (\pi^g|t,o)} \left\| v^g(I( x_0, x_1,t),t)-v^{g,\theta}(I( x_0, x_1,t),t)  \right\| \nonumber\\ &\leq& C \mathbb E_{o}\mathbb E_{t\sim \mathcal U, ( x_0,x_1)\sim (\pi|t,o)} \frac{d(\pi^g|t,o)}{d(\pi|t,o)}\left\| v^g(I( x_0, x_1,t),t)-v^{g,\theta}(I( x_0, x_1,t),t)  \right\| \nonumber
      \\&\leq& C \mathbb E_{o}\mathbb E_{t\sim \mathcal U, ( x_0,x_1)\sim \pi} \frac{d(\pi^g|t,o)}{d\pi}\left\| v^g(I( x_0, x_1,t),t)-v^{g,\theta}(I( x_0, x_1,t),t)  \right\| \nonumber
       \\&\leq& A C \mathbb E_{t\sim \mathcal U, ( x_0,x_1)\sim \pi} \left[\mathbb E_{o}\frac{d(\pi^g|t,o)}{d\pi}\right]\left\| v(I( x_0, x_1,t),t)-v^{\theta}(I( x_0, x_1,t),t)  \right\| \nonumber
        \\&\leq& A C \mathbb E_{t\sim \mathcal U, ( x_0,x_1)\sim \pi} \frac{d(\pi^g|t)}{d\pi}\left\| v(I( x_0, x_1,t),t)-v^{\theta}(I( x_0, x_1,t),t)  \right\| \nonumber
        \\&\leq& A C \sqrt{\mathbb E_{t\sim \mathcal U, ( x_0,x_1)\sim \pi} \left(\frac{d(\pi^g|t)}{d\pi}\right)^2 \mathbb E_{t\sim \mathcal U, ( x_0,x_1)\sim \pi}\left\| v(I( x_0, x_1,t),t)-v^{\theta}(I( x_0, x_1,t),t)  \right\|^2} \nonumber
          \\&\leq& A C \left\|\frac{d\pi^g}{d\pi}\right\|_{\infty} \sqrt{\mathcal L(\theta)\nonumber}
\end{eqnarray}
\end{proof}

\section{Experimental details}

\subsection{Baseline selection}\label{appendix:baselines}

As stated in the main text, the baselines for training-free guidance are scarce. We took \citet{feng2025guidance}'s five variants as our starting point and considered each carefully. 

First $g^{MC}$ requires a large number of parallel samples per step, making it computationally intractable at our experiment's dimensionalities, an issue which the original paper also faces. For most of our experiments (super-resolution and inpainting and our guidance sparsity experiments), $g^{cov-A}$ and $g^{cov-G}$ reduce analytically to $\Pi$GDM and $g^{\text{sim-inv-A}}$ (which we refer to as $\Pi$GDM+) respectively. We hence choose to benchmark our method on these two baselines as they cover all 4 tractable methods from \citet{feng2025guidance}.

This setup also covers FlowChef  \cite{patel2024steeringrectifiedflowmodels}, which gives proportional guidance to $g^{cov-A}$, and in turn $\Pi$GDM, with a factor that is absorbed during our tuning. We choose not to include PnP-Flow \cite{martin2025pnpflowplugandplayimagerestoration}, which is broadly outperformed by FlowChef, and in turn by \citet{feng2025guidance}. The remaining training-free guidance works \cite{erbach2025solvinginverseproblemsflair,park2025flowlpslangevinproximalsamplingflowbased,kalaivanan2025essflowtrainingfreeguidanceflowbased} operate exclusively in latent space, rely on noise-level terms incompatible with noiseless tasks, or do not scale to high dimension tasks; none of them are directly applicable here. 
 
For the baseline choice of collision avoidance, we cannot directly use \pigdm, as it would require finding an \textit{adjoint operator}. Concretely, this means that we would need to find a function $h$ such that, with $p$ the collision avoidance pipeline, $p \circ h \circ p(x) = p(x)$. This means that given the environment and the value of the Euclidean Signed Distance Function, we need to be able to find a set of joint positions that solves to the same ESDF. This is highly non trivial, and if implemented would have a prohibitive cost to use in a robotics setting. Since basic energy minimization does not have such a constraint on the guidance operator, we use it in its simplest form :
$$g_{\text{collision}}=-\lambda_t\nabla_{\widehat x_1}||\widehat x_1-y||$$

\subsection{Robotics}\label{appendix:robot_experiments}
For the robotics tasks we use the same UNet architecture as in \citet{chi2025diffusion}, with 3 levels of depth respectively 256, 512 and 1024. They are trained with a time resolution of $1/128$ for 1000 epochs on the "expert-v2" datasets available in the Minari python library \cite{minari}.  The reported scores use 300 environment seeds.

\subsection{Image Generation} \label{appendix:image_experiments}
For the image generation tasks, a 2 billion parameter UNet is used as defined in the code provided by \citet{feng2025guidance}, with minor fixes to the training code to make it more stable. They are trained using a batch of size 128, base learning rate $10^{-4}$ with AdamW \cite{loshchilov2017decoupled}, warm-up then cosine annealing \cite{loshchilov2016sgdr}.
  For both training and inference we tested and found negligible performance impacts for running the model in mixed precision: the training is done using torch automatic mixed precision, and all methods use BF16 \cite{tagliavini2018transprecision} for the reported scores. We noticed an improvement in base model performance when normalizing data which was not done in \citet{feng2025guidance}'s code. The models are hence trained with normalized data, but we noticed that both \pigdm and \gsim suffered from doing guidance in normalized space with guidance scale varying across two orders of magnitude from task to task. To fairly evaluate the baselines, the model is called in normalized space but the velocity field is then unnormalized before applying guidance:
\begin{equation}
    v_t = \sigma v_\theta\left(\frac{x_{raw}-\mu}{\sigma},t\right)
\end{equation}
where $\mu$ and $\sigma$ are the dataset's mean and standard deviation. Finally, all image metrics are calculated using torchmetrics \cite{Detlefsen2022TorchMetricsM}.

\newpage
\section{Additional results}\label{appendix:results}
\subsection{Robotics}
\label{appendix:robot_results}
\FloatBarrier 
In the robotics setting, our method improves robustness to sparsity for \pigdm and \pigdmp in a similar way, for all 4 tasks.
\begin{figure}[h!]
    \centering
    \includegraphics[width=0.7\linewidth]{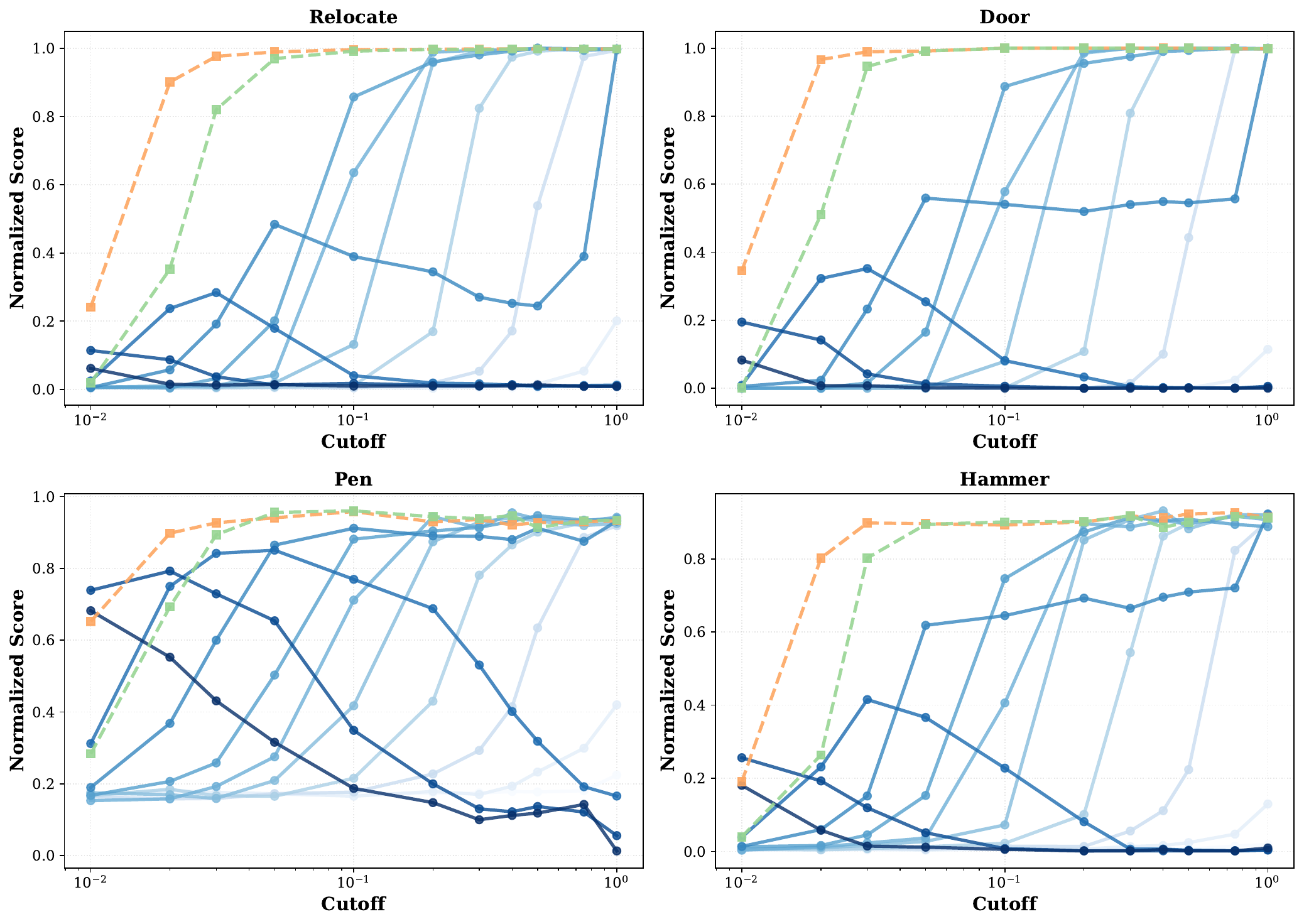}
    \includegraphics[width=0.7\linewidth]{figures/out_pigdm_legend.pdf}
    \caption{Sparsity tests for all Adroit tasks using \pigdmp: the cutoff represents the proportion of guided coordinates. Different shades of blue represent different tunings of the guidance scale for the naive guidance runs ($\beta=1$) }
    \label{fig:placeholder_1}
\end{figure}
\FloatBarrier

\begin{figure}[h!]
    \centering
    \includegraphics[width=0.7\linewidth]{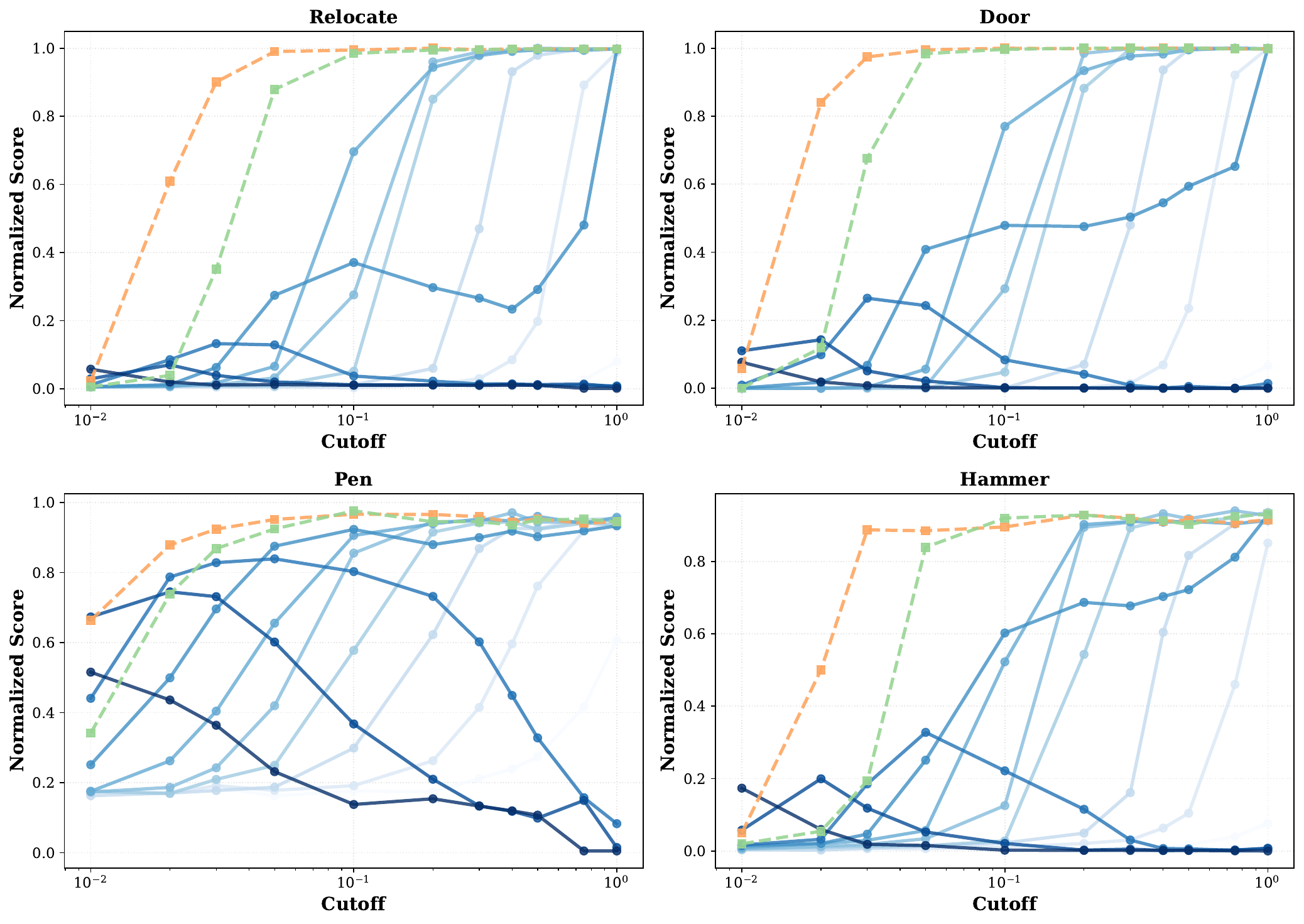}
    \includegraphics[width=0.7\linewidth]{figures/out_pigdm_legend.pdf}
    \caption{Sparsity tests for all Adroit tasks using \pigdm: the cutoffs represent the proportion of guided coordinates. Different shades of blue represent different tunings of the guidance scale for the naive guidance runs ($\beta=1$) }
    \label{fig:placeholder_2}
\end{figure}

\clearpage
\subsection{Images}
\label{appendix:images}
All image to image scores (LPIPS, PSNR, SSIM) behave in similar ways for different tunings and $\beta$ values. The plots look slightly different for the FID scores which can be expected: very strong guidances lead to divergence and artifacts, so the strong guidance cliff is the same. However in the low guidance regions, the model is basically not guided, leading to high quality samples, hence giving a decent FID score. 

\begin{table}[h]
\centering
\caption{Comparison of naive vs.\ coupled guidance across methods and tasks on CelebA-HQ-256.}
\label{tab:results}
\resizebox{0.6\columnwidth}{!}{%
\begin{tabular}{llcccc}
\toprule
 & & \multicolumn{2}{c}{\textbf{Naive}} & \multicolumn{2}{c}{\textbf{Coupled}} \\
\cmidrule(lr){3-4} \cmidrule(lr){5-6}
\textbf{Task} & \textbf{Metric} & PiGDM & PiGDM+ & PiGDM & PiGDM+ \\
\midrule
\multirow{4}{*}{Inpainting}
  & FID$\downarrow$   & 54.3  & 7.5   & 20.7           & \textbf{7.5} \\
  & LPIPS$\downarrow$ & 0.184 & \textbf{0.041} & 0.100          & \textbf{0.041} \\
  & PSNR$\uparrow$    & 20.5  & 29.3  & \textbf{30.1}  & 29.1 \\
  & SSIM$\uparrow$    & 0.873 & \textbf{0.953} & 0.927 & \textbf{0.953} \\
\midrule
\multirow{4}{*}{Deblurring}
  & FID$\downarrow$   & 54.0  & \textbf{22.7 } & 33.0  & \textbf{22.7 }\\
  & LPIPS$\downarrow$ & 0.462 & 0.153 & 0.209 & \textbf{0.148} \\
  & PSNR$\uparrow$    & 20.1  & 30.0  & 28.6  & \textbf{30.6} \\
  & SSIM$\uparrow$    & 0.472 & 0.830 & 0.784 & \textbf{0.840} \\
\midrule
\multirow{4}{*}{Super-res.}
  & FID$\downarrow$   & 55.9  & 76.9  & \textbf{25.6}  & 77.0 \\
  & LPIPS$\downarrow$ & 0.502 & 0.486 & \textbf{0.166} & 0.431 \\
  & PSNR$\uparrow$    & 18.1  & 19.6  & \textbf{28.8}  & 19.3 \\
  & SSIM$\uparrow$    & 0.449 & 0.443 & \textbf{0.833} & 0.459 \\
\bottomrule
\end{tabular}%
}
\end{table}

\begin{table}[h]
\centering
\caption{Comparison of naive vs.\ coupled guidance across methods and tasks on ImageNet (CFM).}
\label{tab:results_imagenet_cfm}
\resizebox{0.6\columnwidth}{!}{%
\begin{tabular}{llcccc}
\toprule
 & & \multicolumn{2}{c}{\textbf{Naive}} & \multicolumn{2}{c}{\textbf{Coupled}} \\
\cmidrule(lr){3-4} \cmidrule(lr){5-6}
\textbf{Task} & \textbf{Metric} & $\Pi$GDM & $\Pi$GDM+ & $\Pi$GDM & $\Pi$GDM+ \\
\midrule
\multirow{4}{*}{Inpainting}
  & FID$\downarrow$   & 13.0  & 10.5 & \textbf{8.8}  & 11.3 \\
  & LPIPS$\downarrow$ & 0.106 & \textbf{0.086} & 0.092 & 0.088 \\
  & PSNR$\uparrow$    & 20.4  & 20.5  & \textbf{23.4}  & 20.6 \\
  & SSIM$\uparrow$    & 0.907 & \textbf{0.911} & \textbf{0.908} & \textbf{0.909} \\
\midrule
\multirow{4}{*}{Deblurring}
  & FID$\downarrow$   & 58.6  & 1.6   & 16.6  & \textbf{1.2} \\
  & LPIPS$\downarrow$ & 0.452 & 0.029 & 0.232 & \textbf{0.021} \\
  & PSNR$\uparrow$    & 19.4  & 32.9  & 22.9  & \textbf{34.0} \\
  & SSIM$\uparrow$    & 0.474 & 0.969 & 0.749 & \textbf{0.976} \\
\midrule
\multirow{4}{*}{Super-res.}
  & FID$\downarrow$   & 20.7  & 49.3 & \textbf{8.9}  & 50.6 \\
  & LPIPS$\downarrow$ & 0.230 & 0.411 & \textbf{0.150} & 0.418 \\
  & PSNR$\uparrow$    & 21.8  & 17.2  & \textbf{23.9}  & 17.2 \\
  & SSIM$\uparrow$    & 0.741 & 0.458 & \textbf{0.811} & 0.445 \\
\bottomrule
\end{tabular}%
}
\end{table}

\begin{figure*}[h!]
    \centering
    \includegraphics[width=0.31\linewidth]{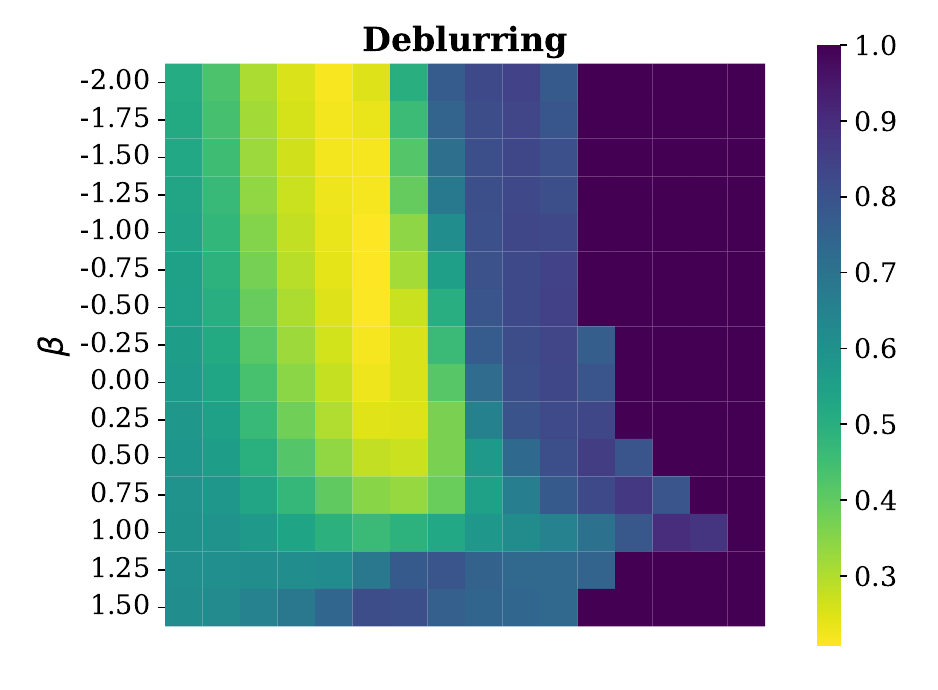}
    \includegraphics[width=0.31\linewidth]{figures/plots_PiGDM_superresolution_lpips_score_heatmap.pdf}
    \includegraphics[width=0.31\linewidth]{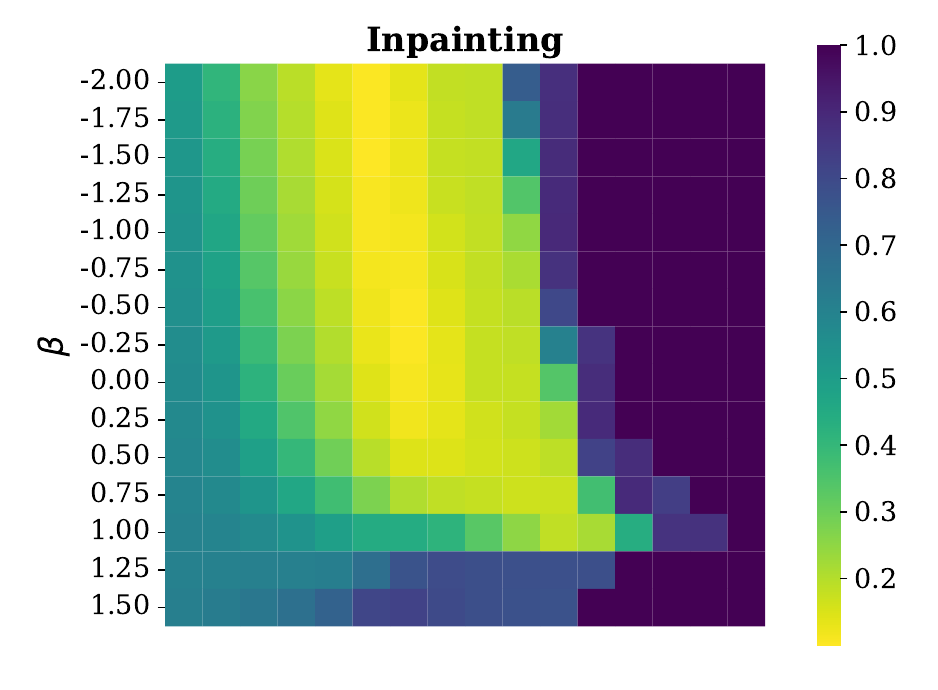}
    \includegraphics[width=0.31\linewidth]{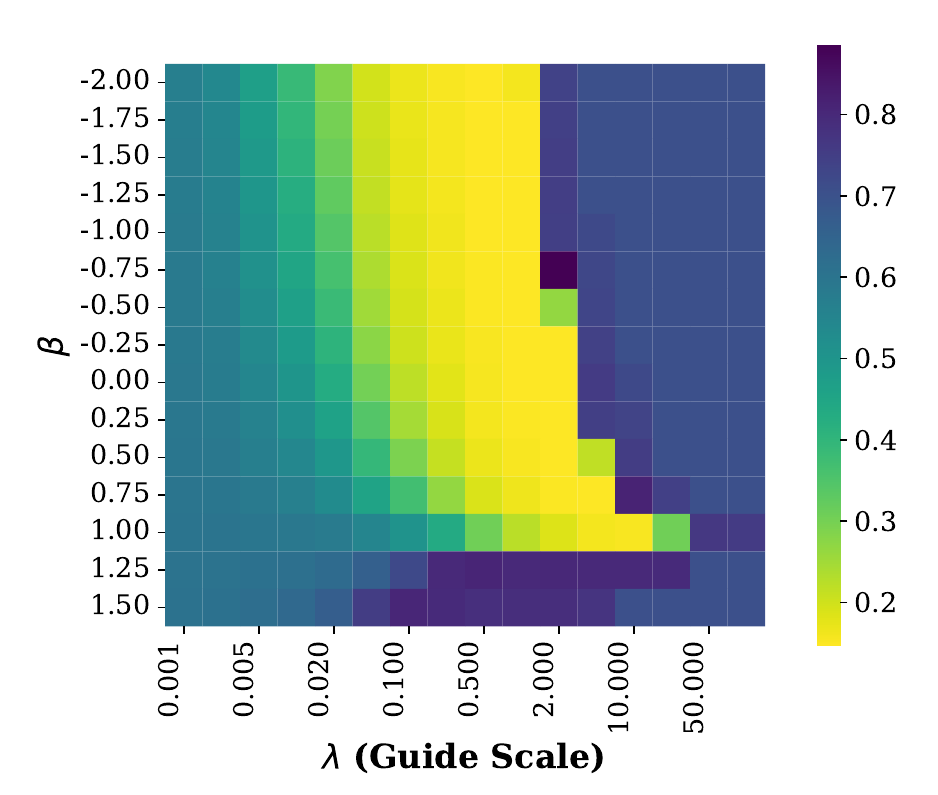}
    \includegraphics[width=0.31\linewidth]{figures/plots_PiGDM+_superresolution_lpips_score_heatmap.pdf}
    \includegraphics[width=0.31\linewidth]{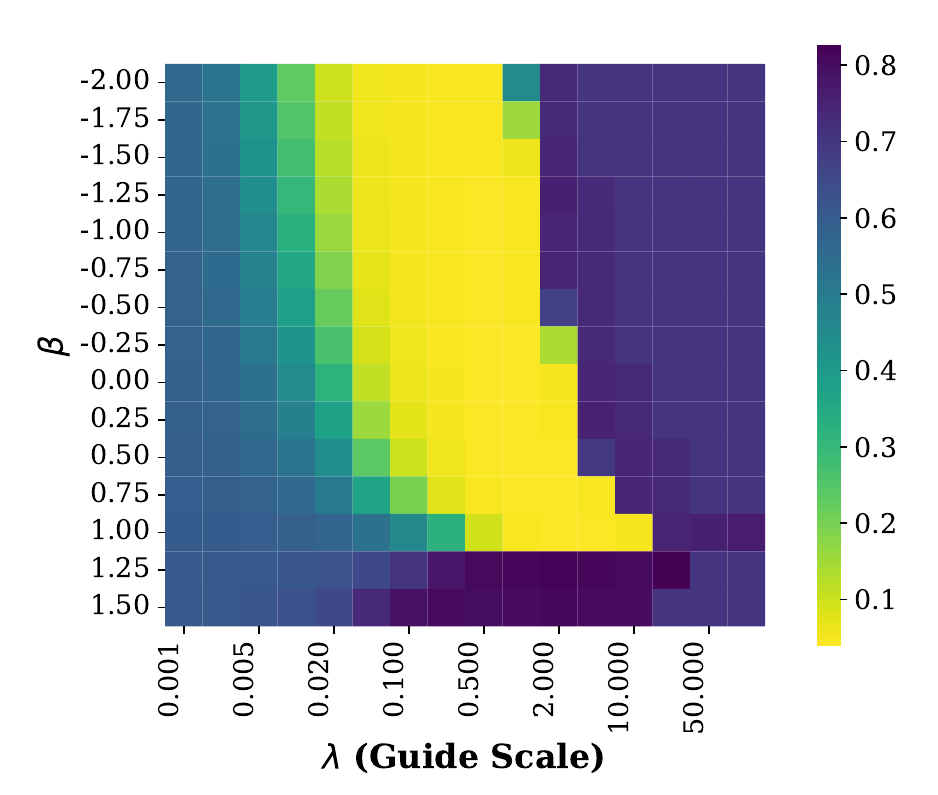}
    \caption{Sample fidelity using different tuning scales and $\beta$ values for deblurring, superresolution and inpainting (left to right). First row shows the LPIPS scores for \pigdm, second row shows the performance of \pigdmp}
    \label{fig:heatmaps_appendix_1}
\end{figure*}
\begin{figure*}[h!]
    \centering
    \includegraphics[width=0.31\linewidth]{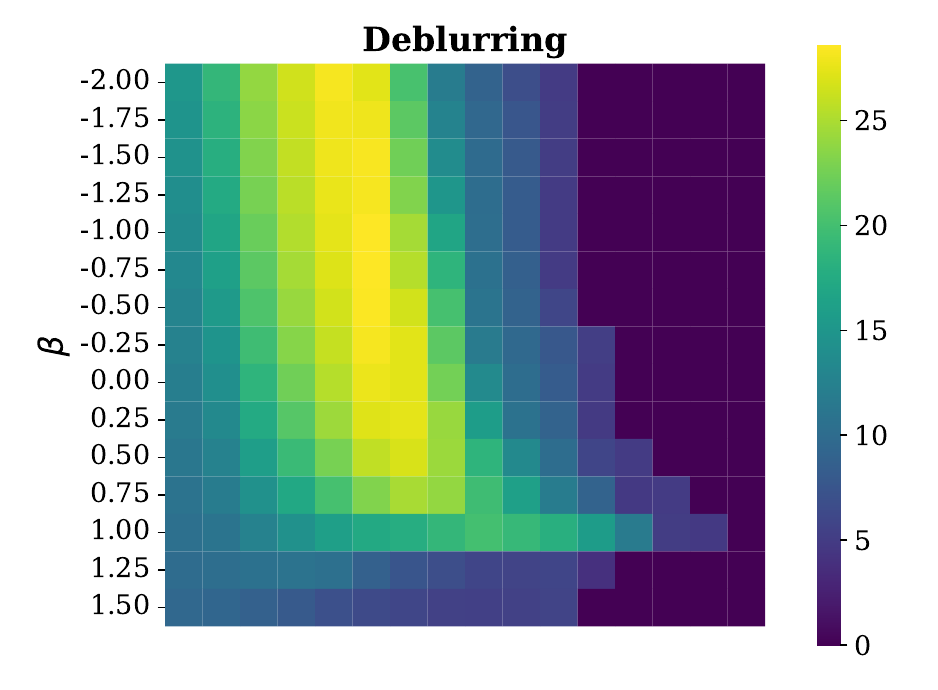}
    \includegraphics[width=0.31\linewidth]{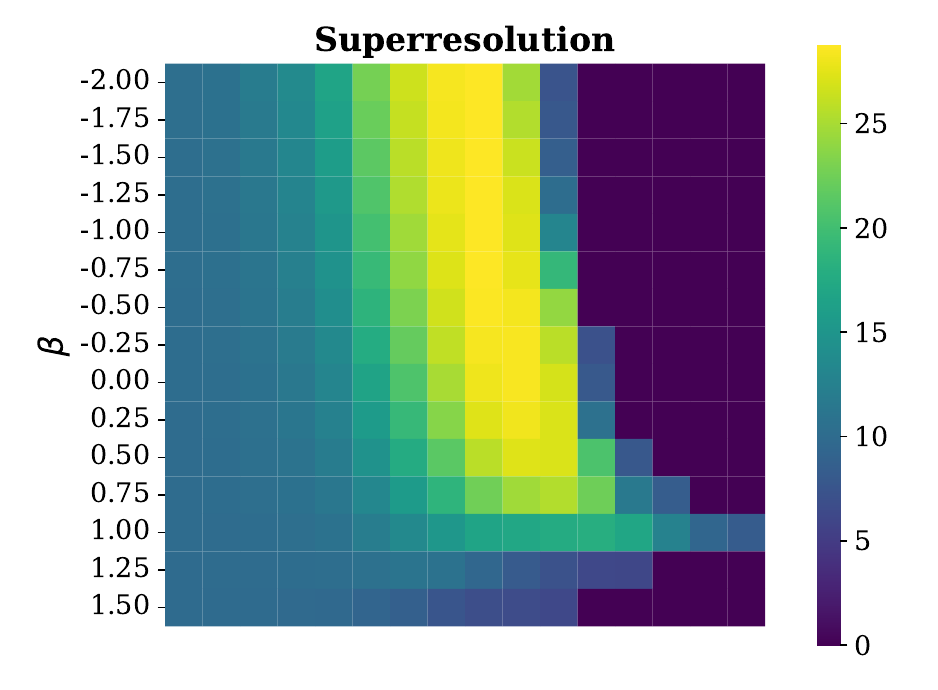}
    \includegraphics[width=0.31\linewidth]{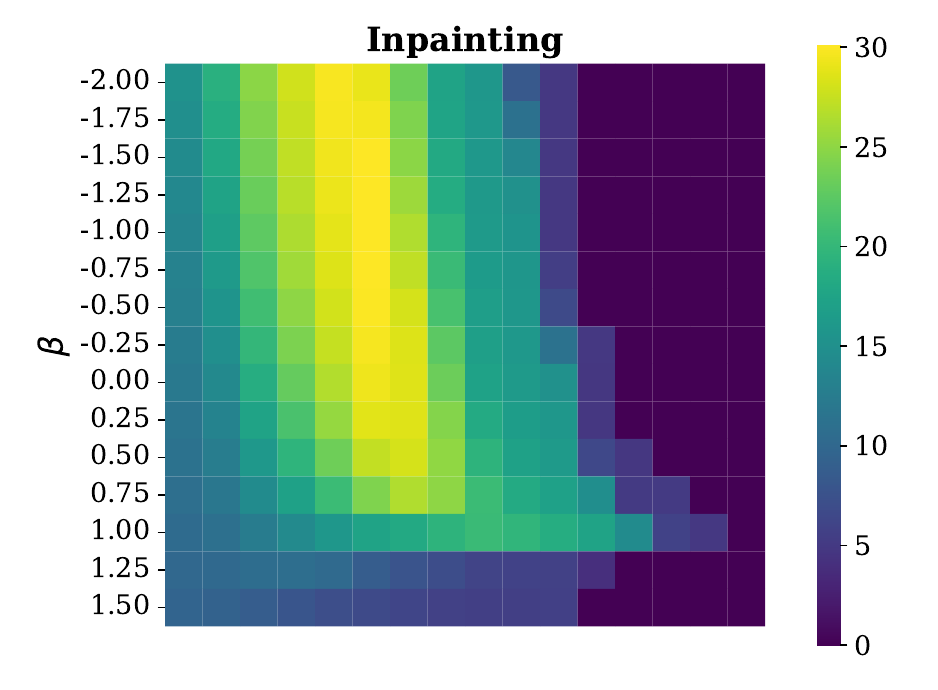}
    \includegraphics[width=0.31\linewidth]{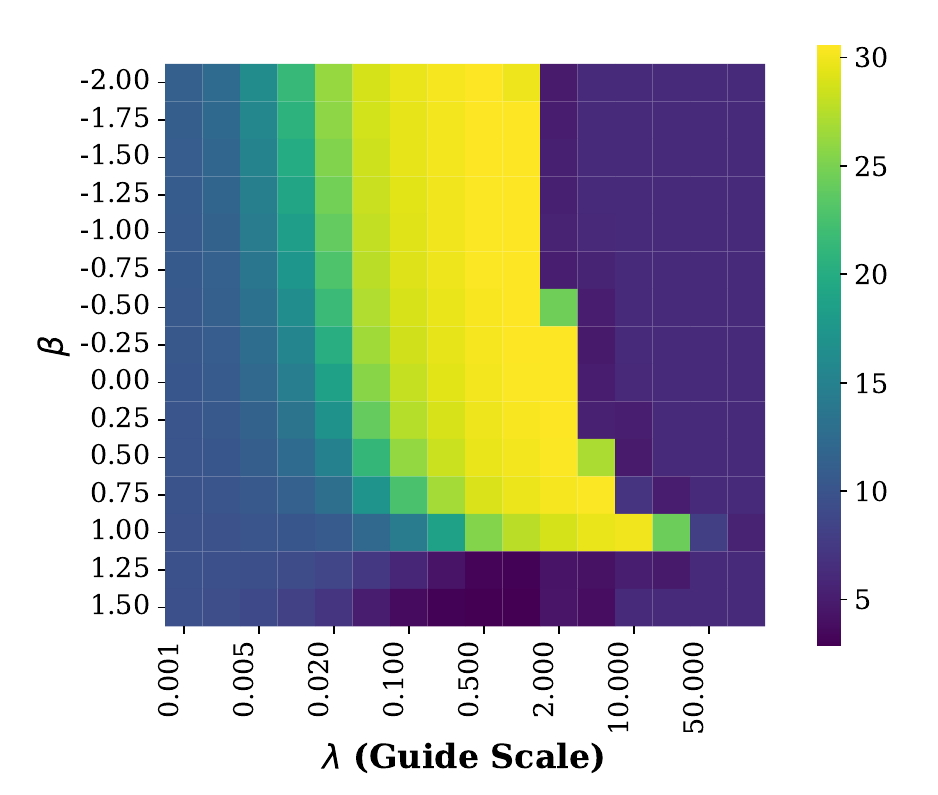}
    \includegraphics[width=0.31\linewidth]{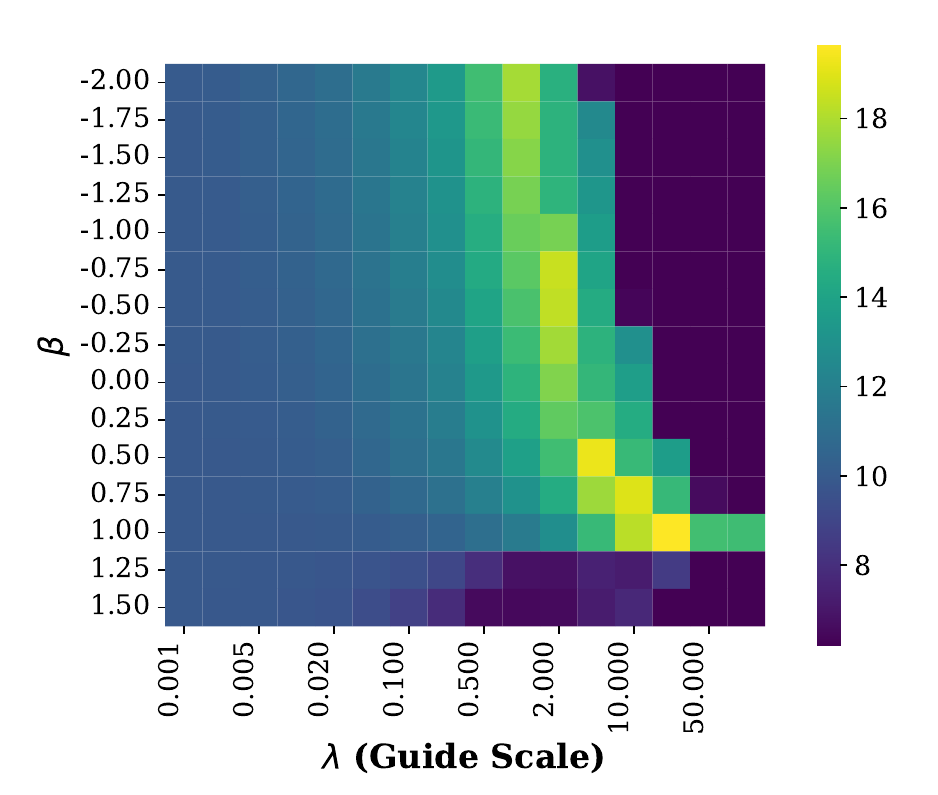}
    \includegraphics[width=0.31\linewidth]{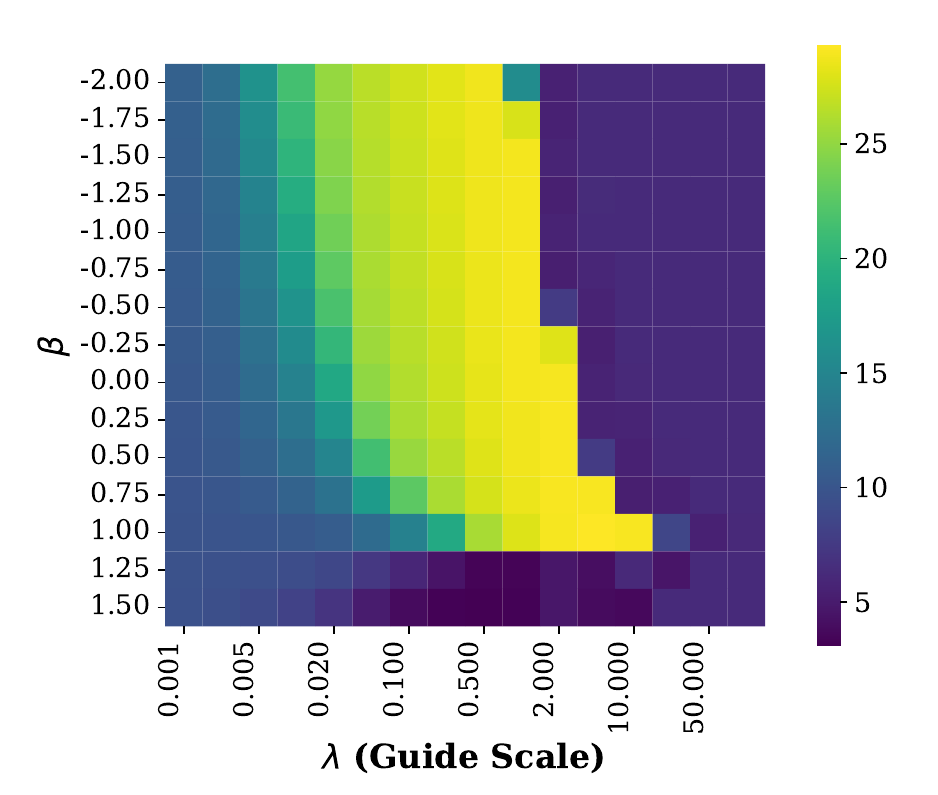}
    \caption{Sample fidelity using different tuning scales and $\beta$ values for deblurring, superresolution and inpainting (left to right). First row shows the PSNR scores for \pigdm, second row shows the performance of \pigdmp}
    \label{fig:heatmaps_appendix_2}
\end{figure*}
\begin{figure*}[h!]
    \centering
    \includegraphics[width=0.31\linewidth]{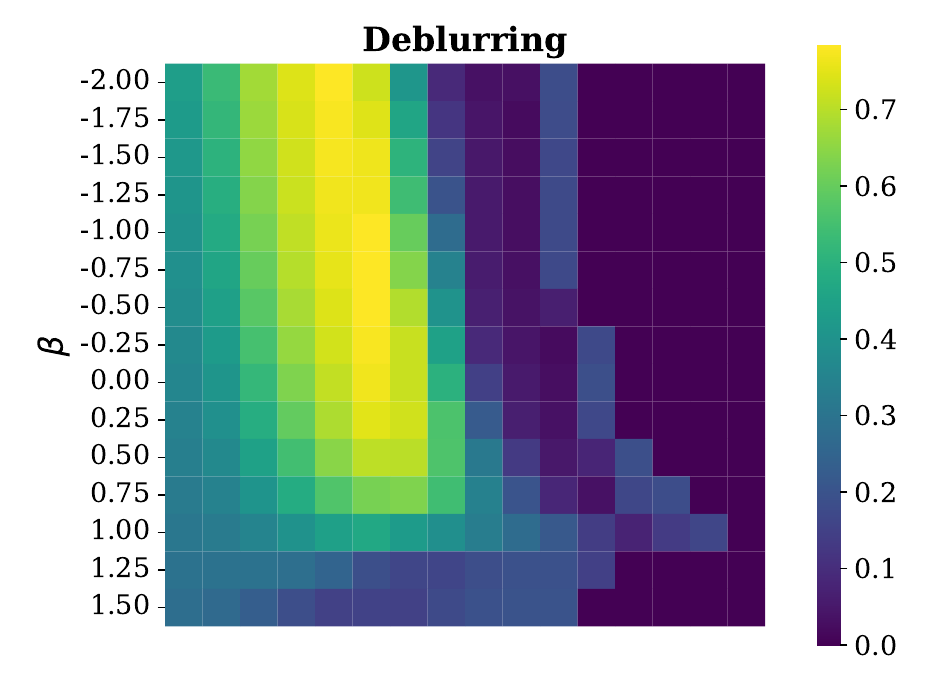}
    \includegraphics[width=0.31\linewidth]{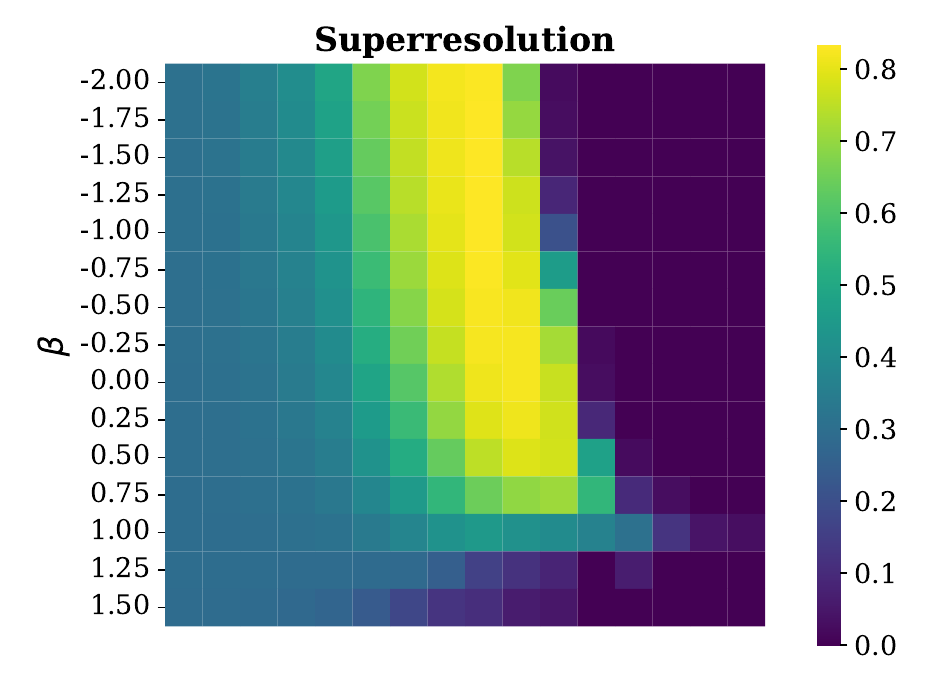}
    \includegraphics[width=0.31\linewidth]{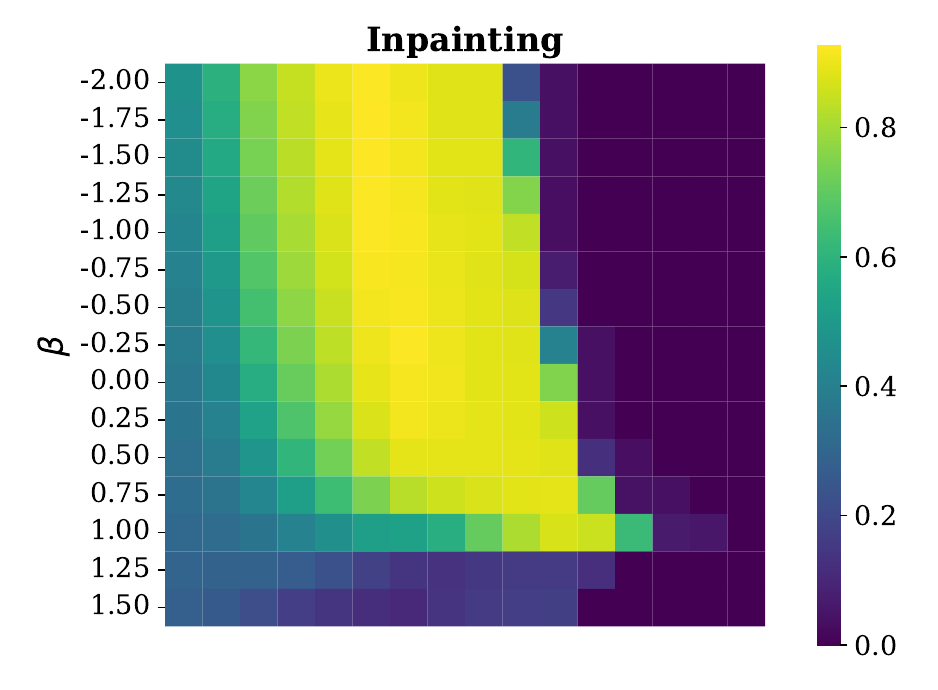}
    \includegraphics[width=0.31\linewidth]{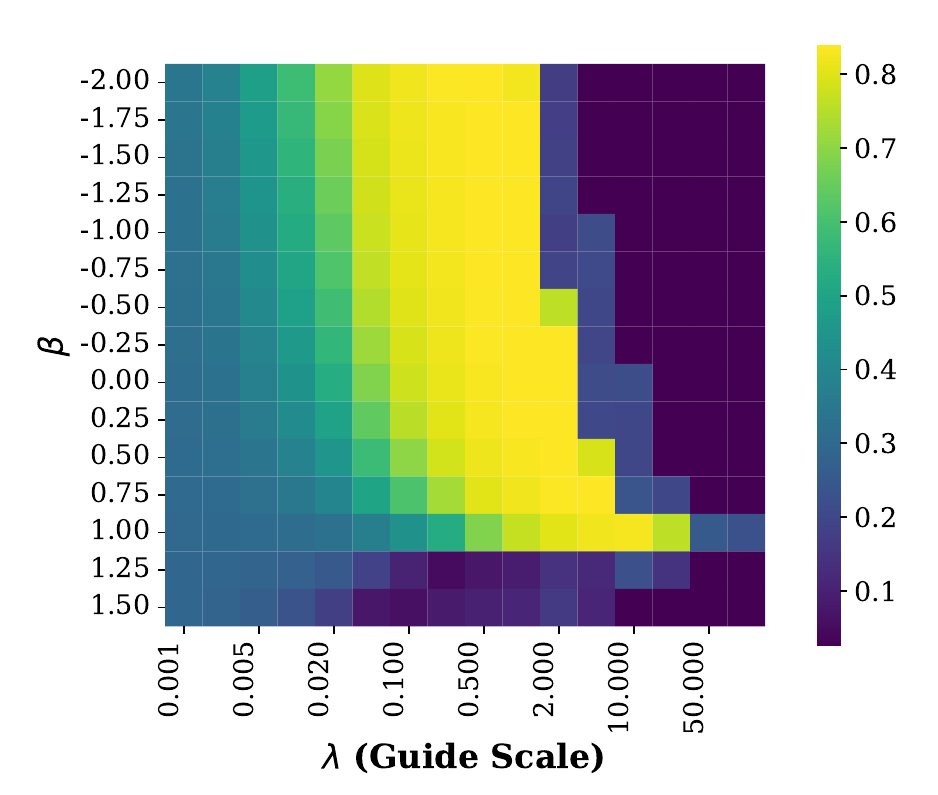}
    \includegraphics[width=0.31\linewidth]{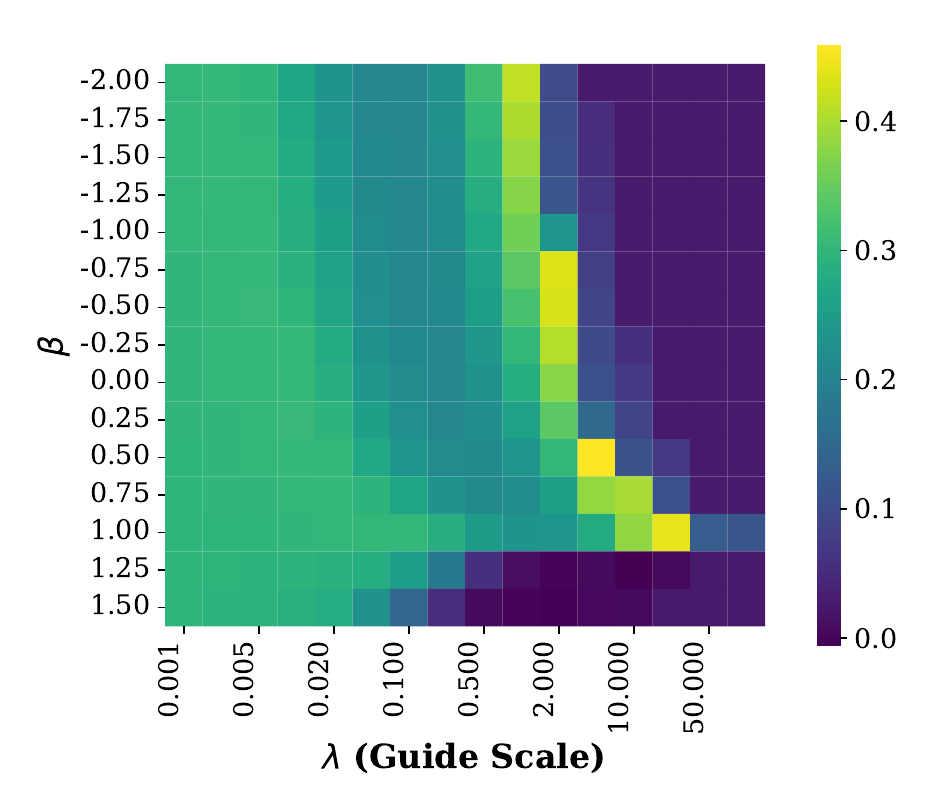}
    \includegraphics[width=0.31\linewidth]{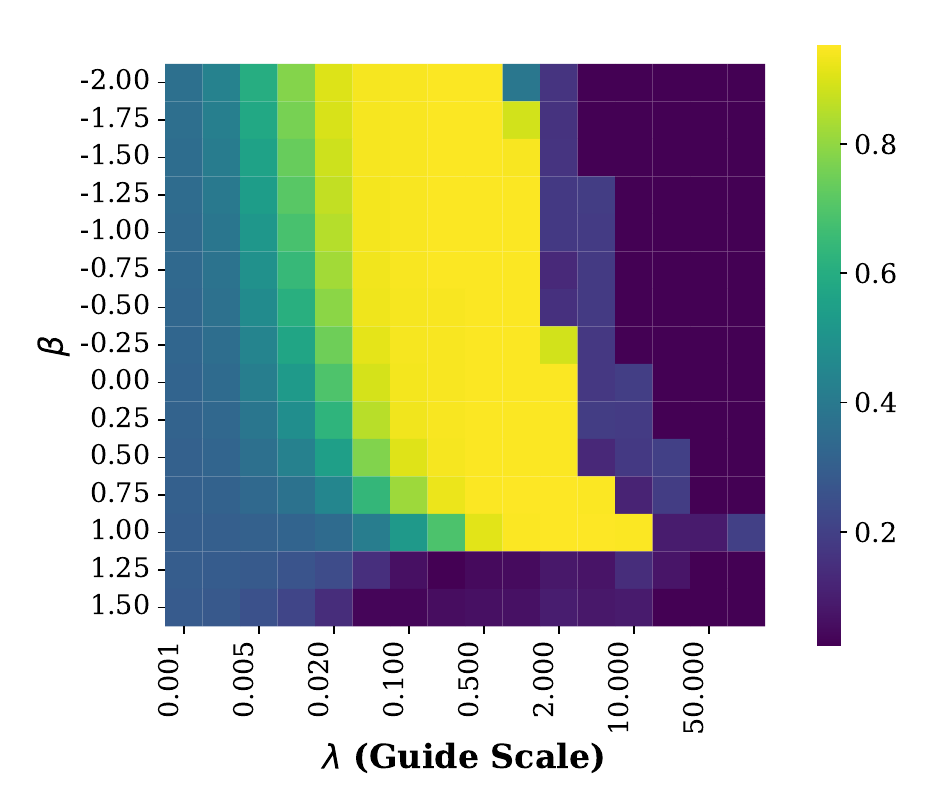}
    \caption{Sample fidelity using different tuning scales and $\beta$ values for deblurring, superresolution and inpainting (left to right). First row shows the SSIM scores for \pigdm, second row shows the performance of \pigdmp}
    \label{fig:heatmaps_appendix_3}
\end{figure*}
\begin{figure*}[h!]
    \centering
    \includegraphics[width=0.31\linewidth]{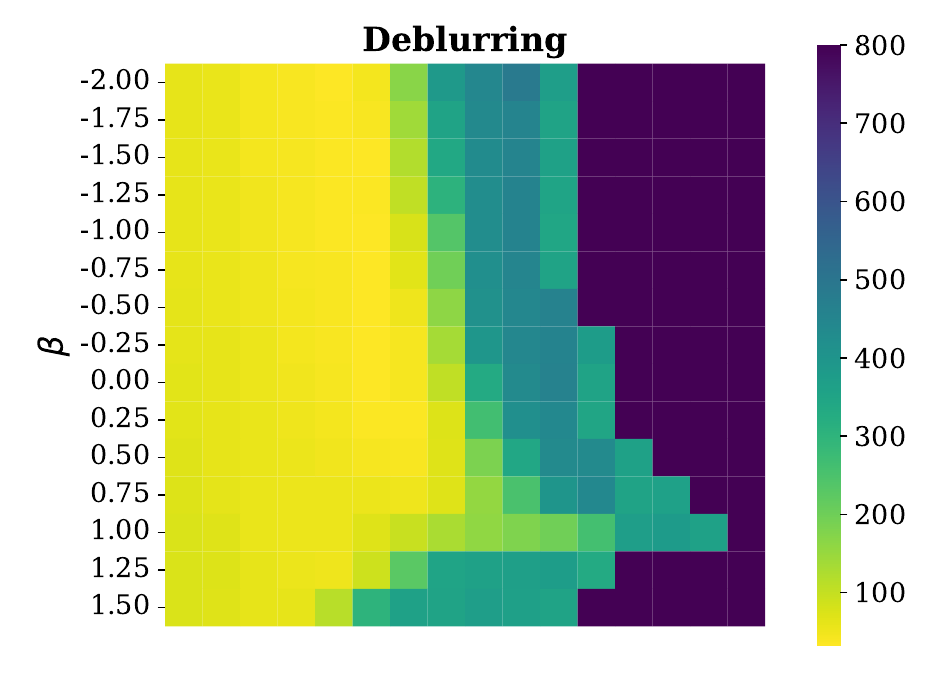}
    \includegraphics[width=0.31\linewidth]{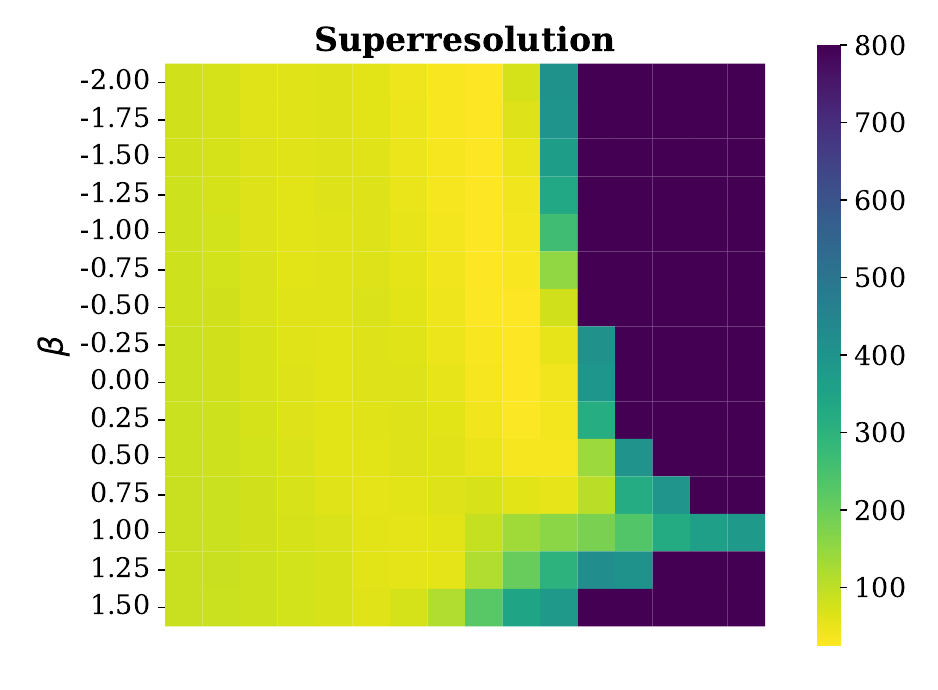}
    \includegraphics[width=0.31\linewidth]{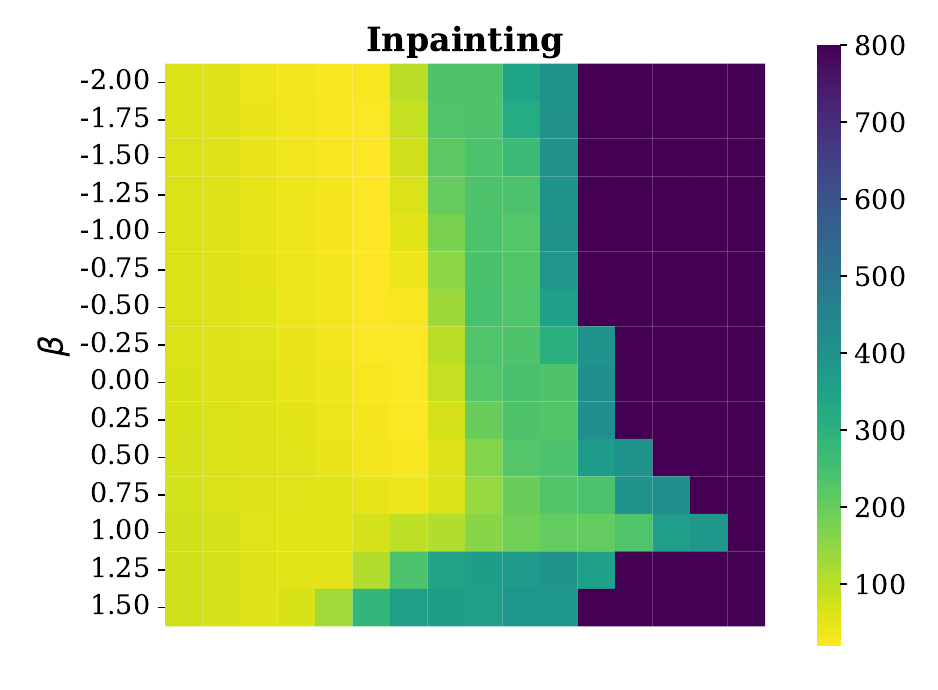}
    \includegraphics[width=0.31\linewidth]{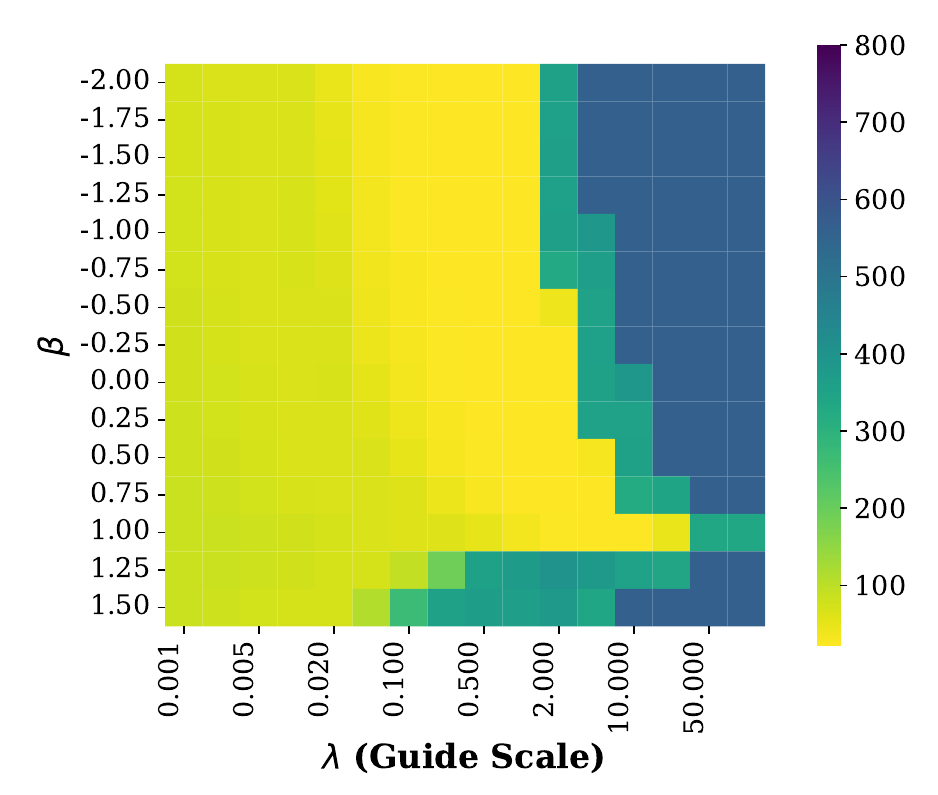}
    \includegraphics[width=0.31\linewidth]{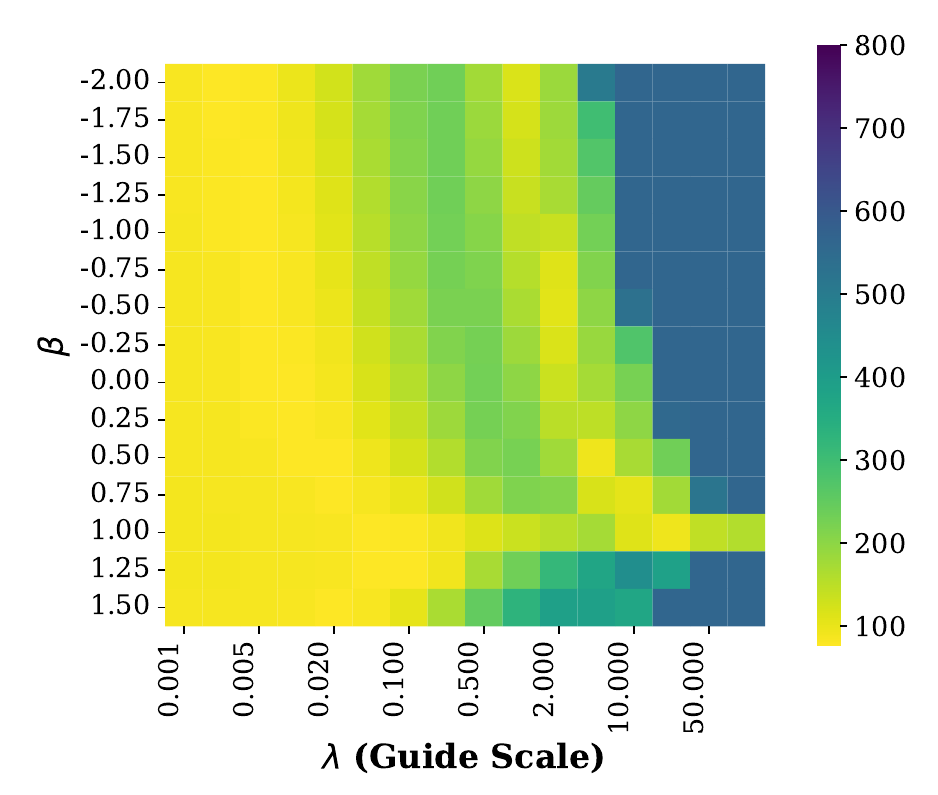}
    \includegraphics[width=0.31\linewidth]{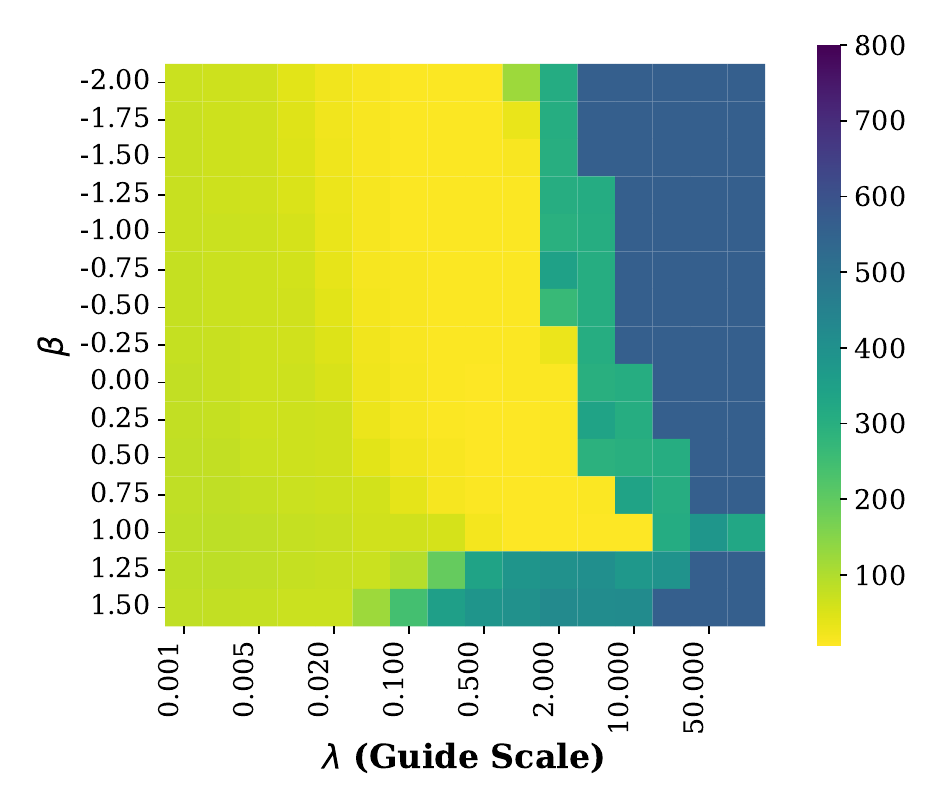}
    \caption{Sample fidelity using different tuning scales and $\beta$ values for deblurring, superresolution and inpainting (left to right). First row shows the FID scores for \pigdm, second row shows the performance of \pigdmp}
    \label{fig:heatmaps_appendix_4}
\end{figure*}
\FloatBarrier
We also provide visualizations for both $\widehat x_1$ through inference and final samples for all tasks and a range of beta values. These samples show the difference in convergence between naive and coupled guidance and help understand where the artifacts created by naive guidance come from.

\begin{figure}
    \centering
    \includegraphics[width=0.72\linewidth]{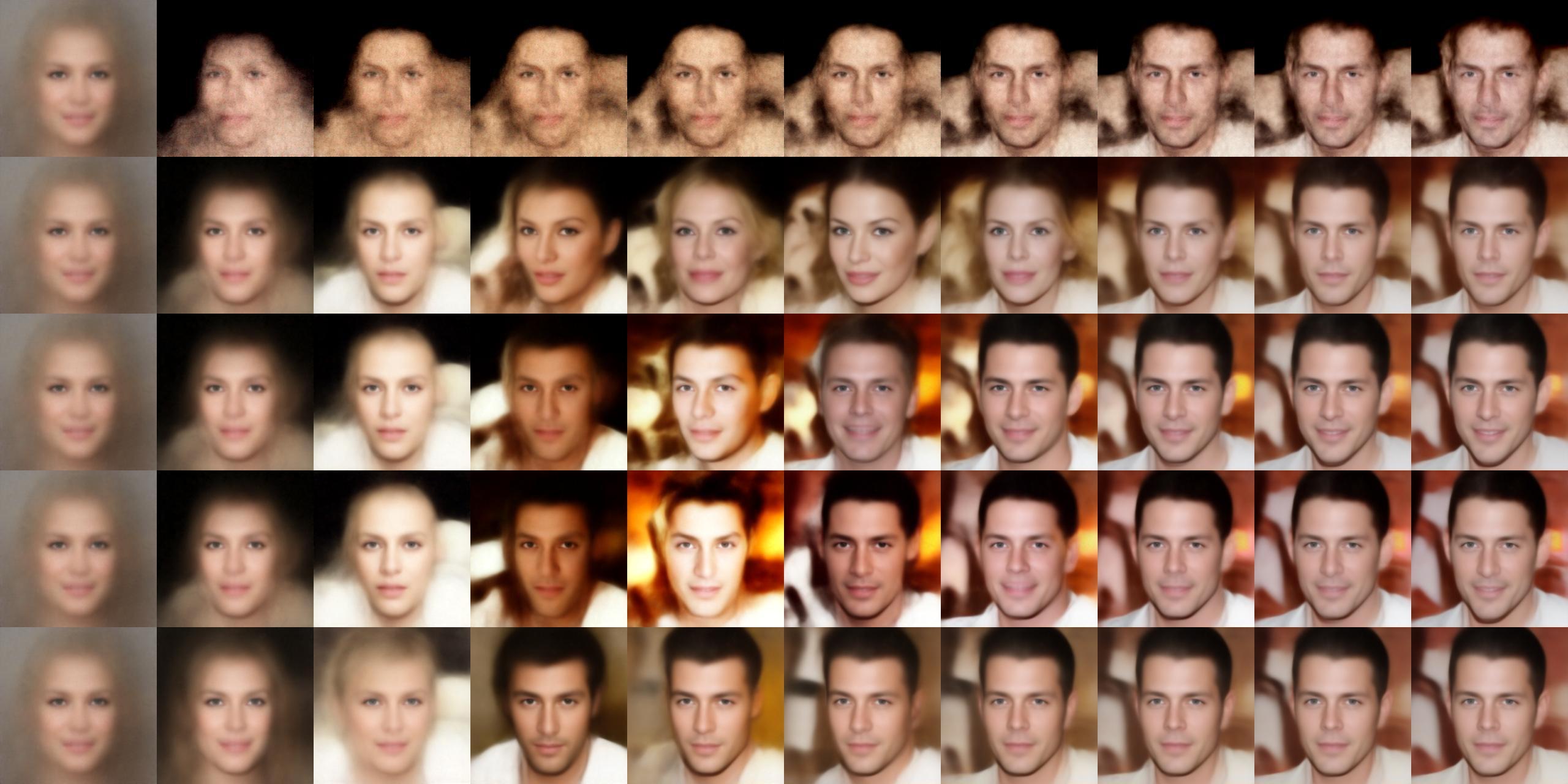}
    \caption{$\widehat x_1$ predictions using \pigdm for the first 10 steps of deblurring showing the convergence of the guided process for $\beta \in [1,0.75,0.5,0.25,0]$ (from top to bottom)}
    \label{fig:placeholder_3}
\end{figure}

\begin{figure}
    \centering
    \includegraphics[width=0.72\linewidth]{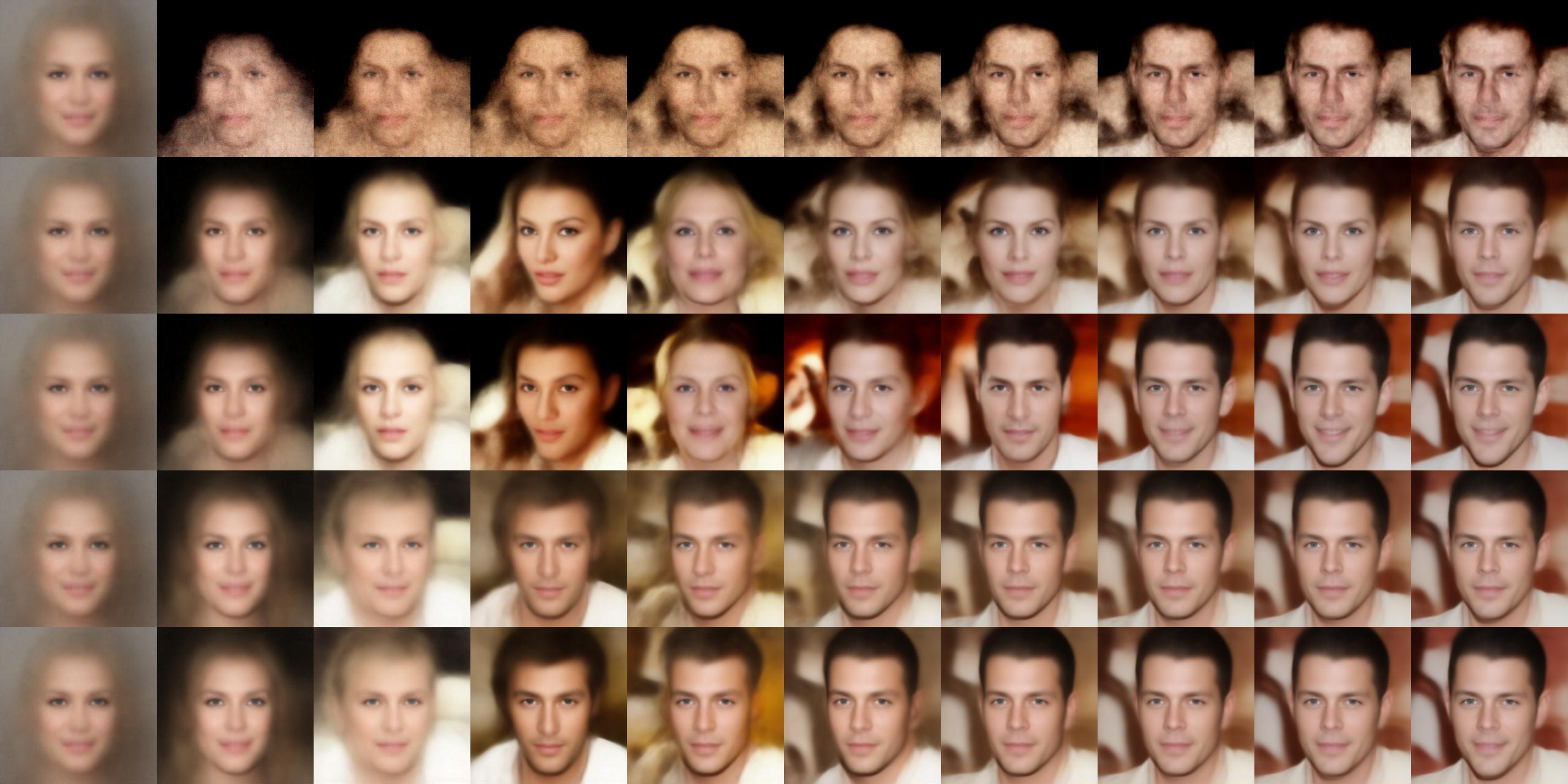}
    \caption{$\widehat x_1$ predictions using \pigdm for the first 10 steps of inpainting showing the convergence of the guided process for $\beta \in [1,0.75,0.5,0.25,0]$ (from top to bottom)}
    \label{fig:placeholder_4}
\end{figure}

\begin{figure}
    \centering
    \includegraphics[width=0.72\linewidth]{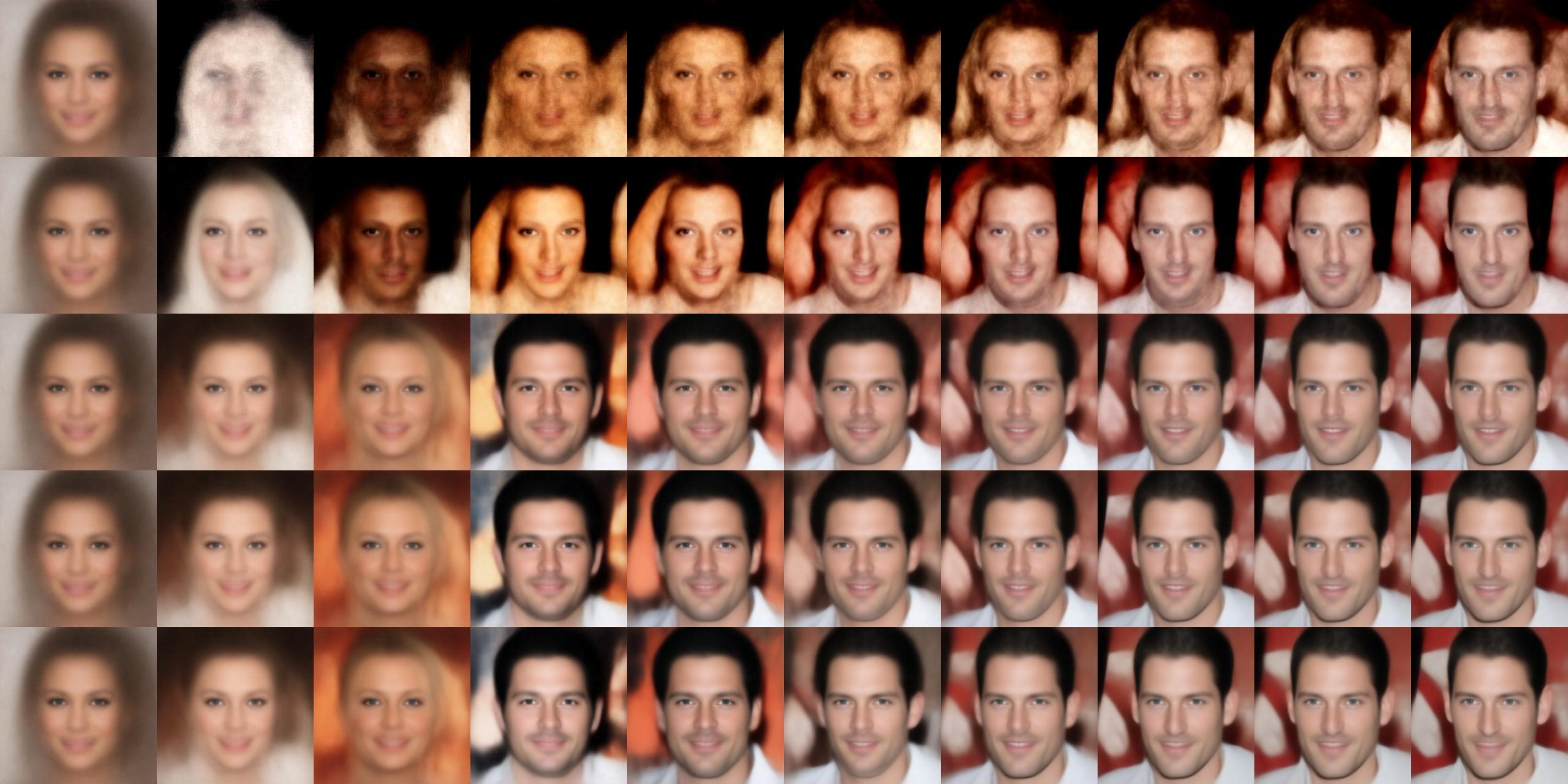}
    \caption{$\widehat x_1$ predictions using \pigdm for the first 10 steps of superresolution showing the convergence of the guided process for $\beta \in [1,0.75,0.5,0.25,0]$ (from top to bottom)}
    \label{fig:placeholder_5}
\end{figure}

\begin{figure}
    \centering
    \includegraphics[width=0.72\linewidth]{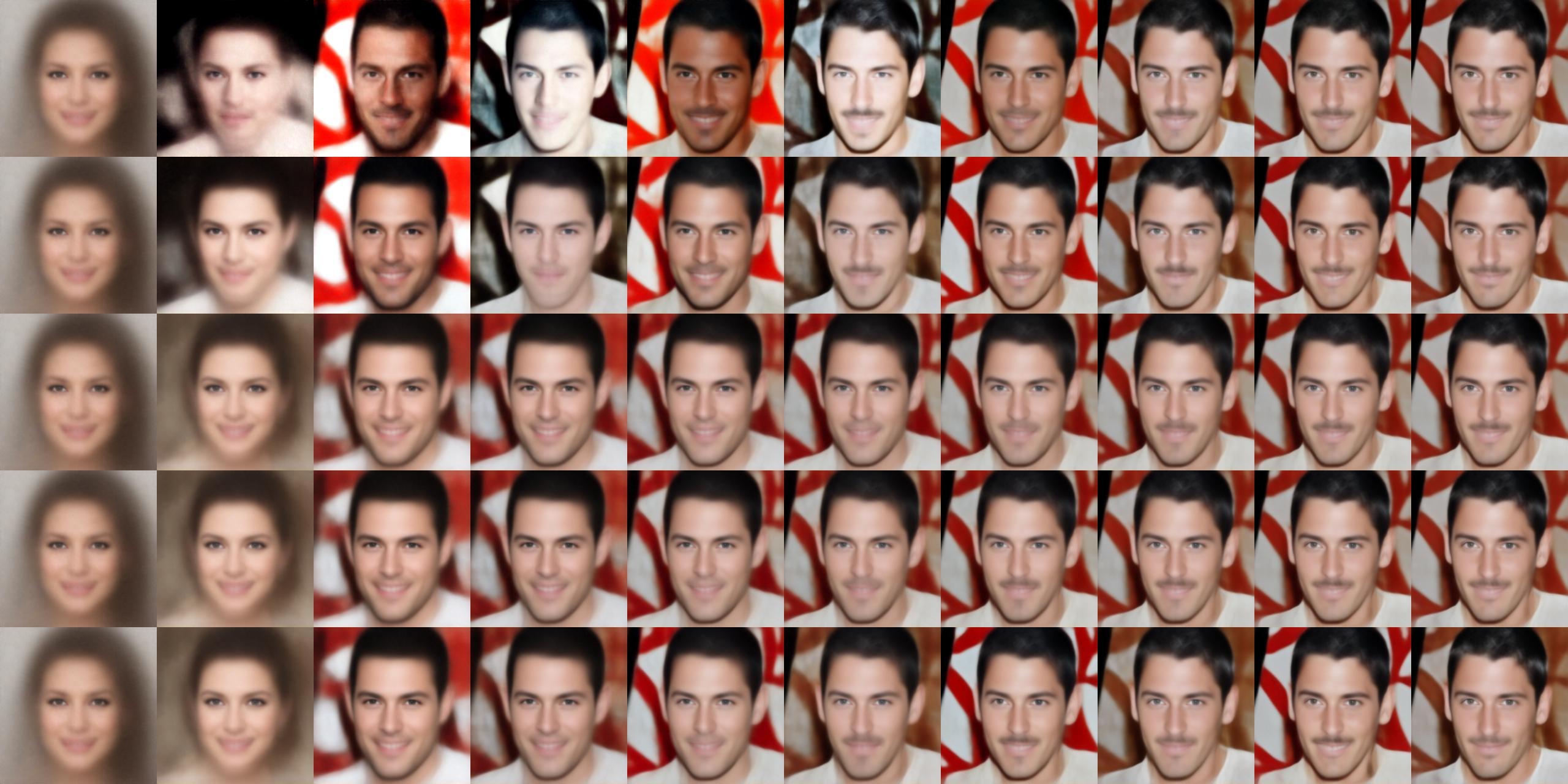}
    \caption{$\widehat x_1$ predictions using \pigdmp for the first 10 steps of deblurring showing the convergence of the guided process for $\beta \in [1,0.75,0.5,0.25,0]$ (from top to bottom)}
    \label{fig:placeholder_6}
\end{figure}

\begin{figure}
    \centering
    \includegraphics[width=0.72\linewidth]{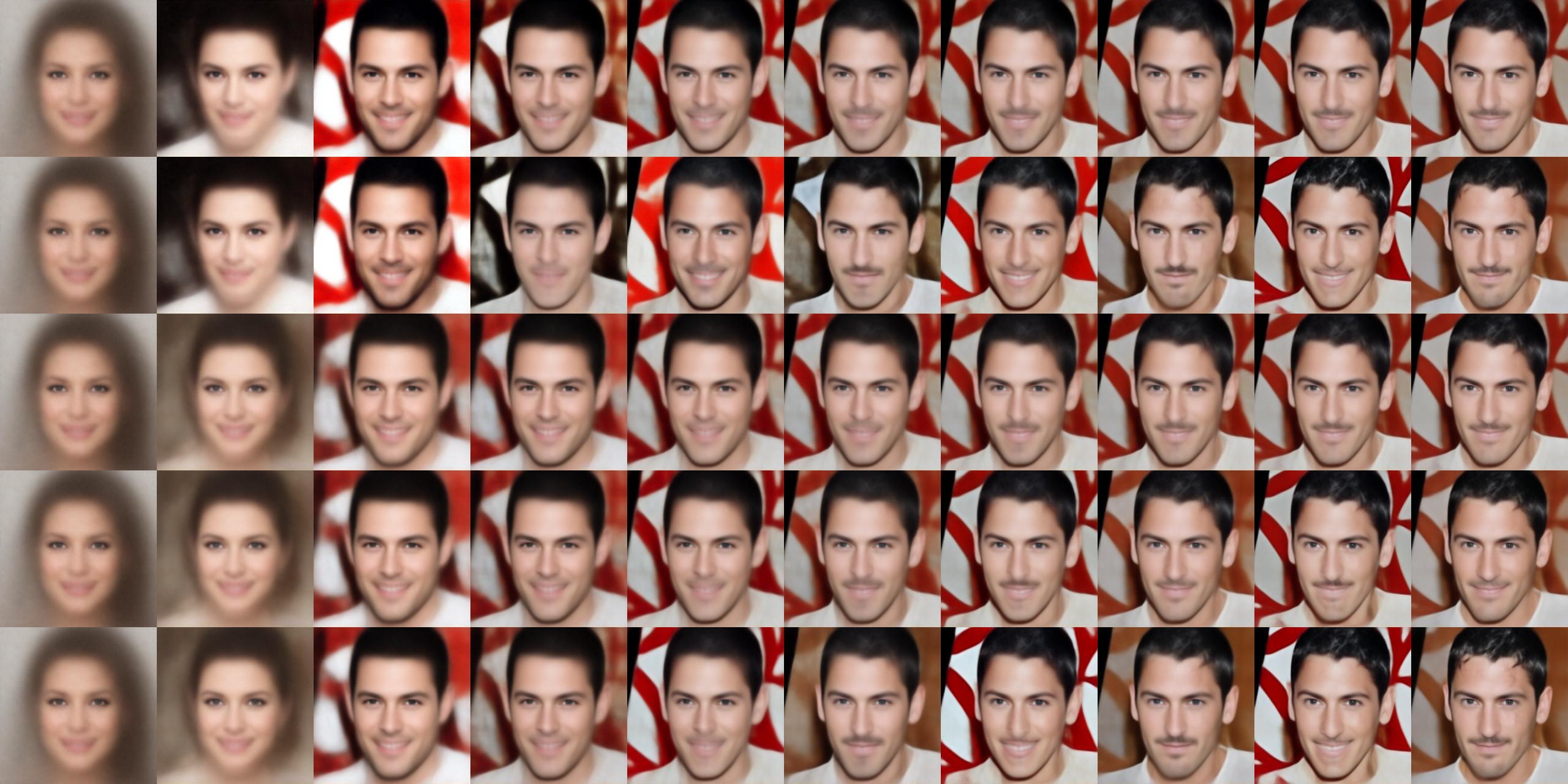}
    \caption{$\widehat x_1$ predictions using \pigdmp for the first 10 steps of inpainting showing the convergence of the guided process for $\beta \in [1,0.75,0.5,0.25,0]$ (from top to bottom)}
    \label{fig:placeholder_7}
\end{figure}

\begin{figure}
    \centering
    \includegraphics[width=0.72\linewidth]{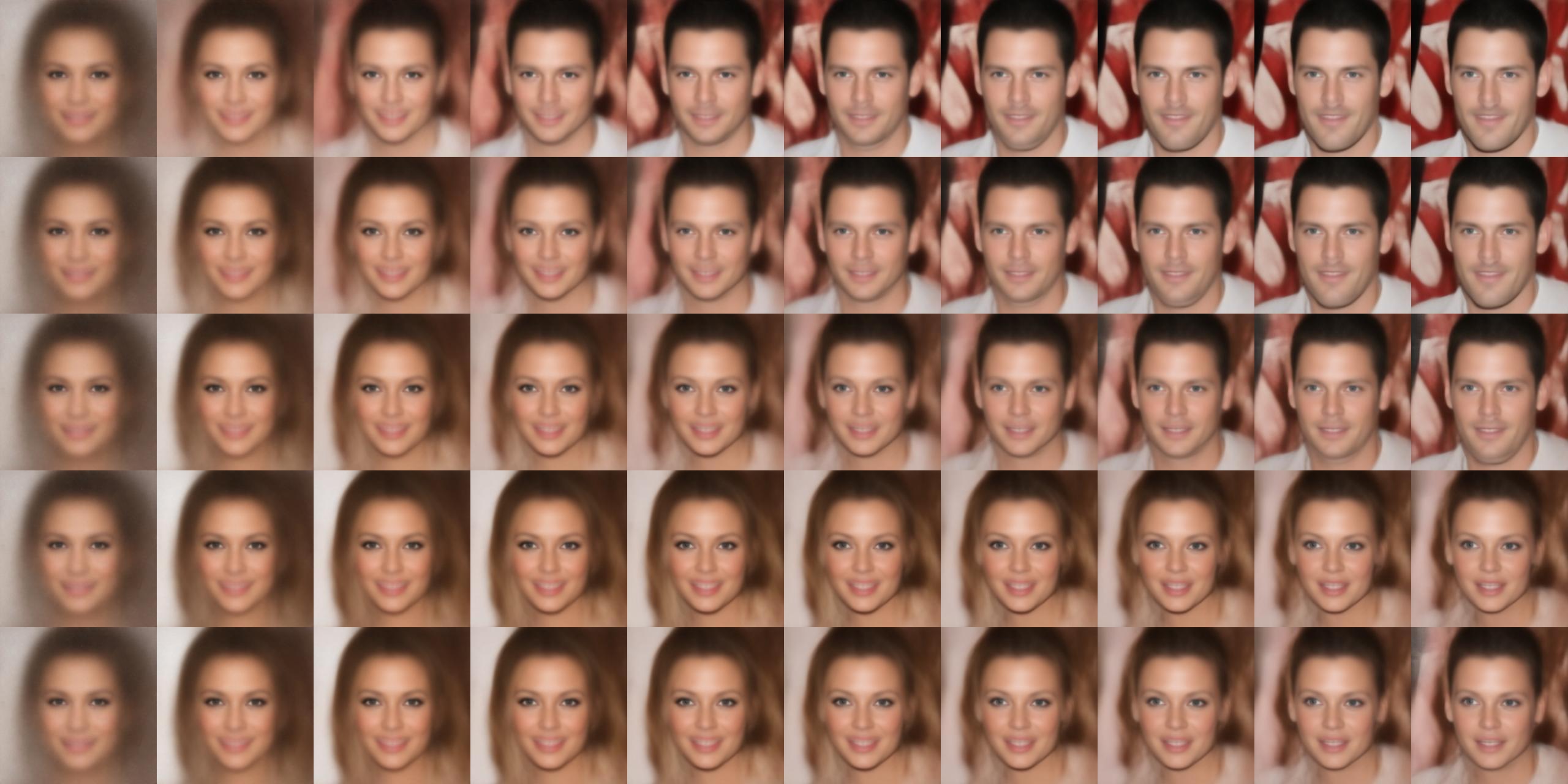}
    \caption{$\widehat x_1$ predictions using \pigdmp for the first 10 steps of superresolution showing the convergence of the guided process for $\beta \in [1,0.75,0.5,0.25,0]$ (from top to bottom)}
    \label{fig:placeholder_8}
\end{figure}
\begin{figure}
    \centering
    \includegraphics[width=0.72\linewidth]{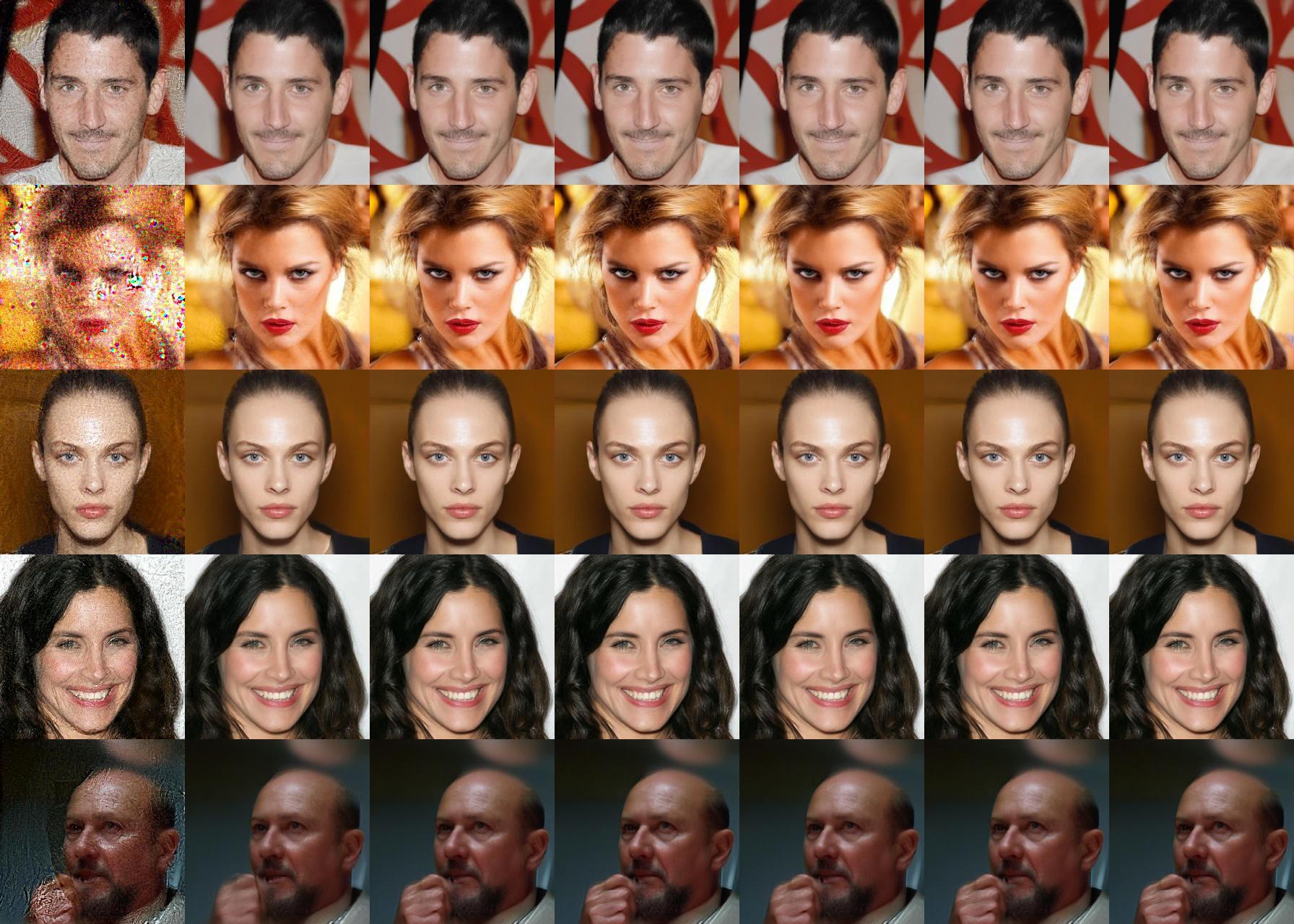}
    \caption{Final samples for deblurring using \pigdm with $\beta \in [1,0.75,0.5,0.25,0]$ (from left to right)}
    \label{fig:placeholder_9}
\end{figure}

\begin{figure}
    \centering
    \includegraphics[width=0.72\linewidth]{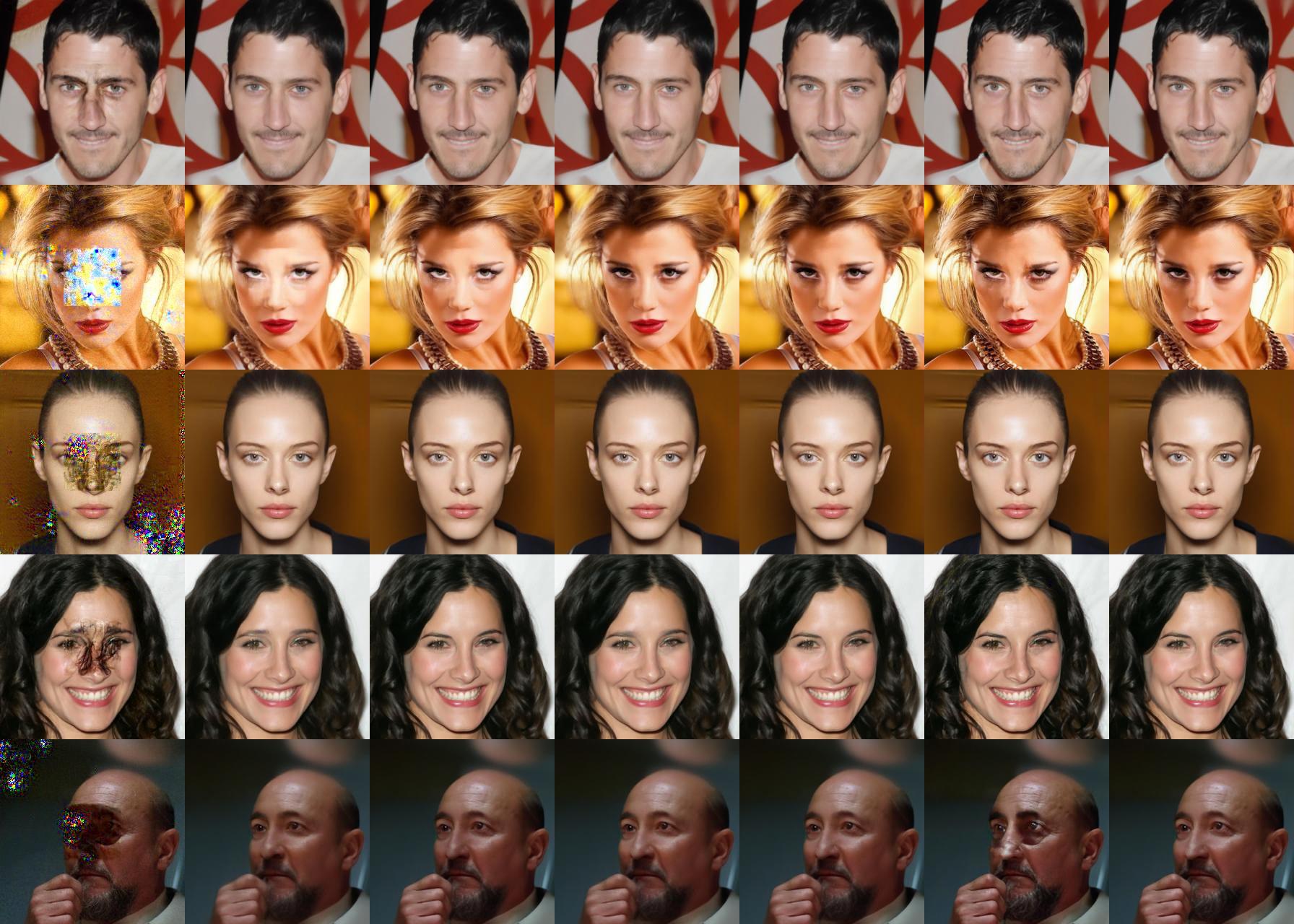}
    \caption{Final samples for inpainting using \pigdm with $\beta \in [1,0.75,0.5,0.25,0]$ (from left to right)}
    \label{fig:placeholder_10}
\end{figure}

\begin{figure}
    \centering
    \includegraphics[width=0.72\linewidth]{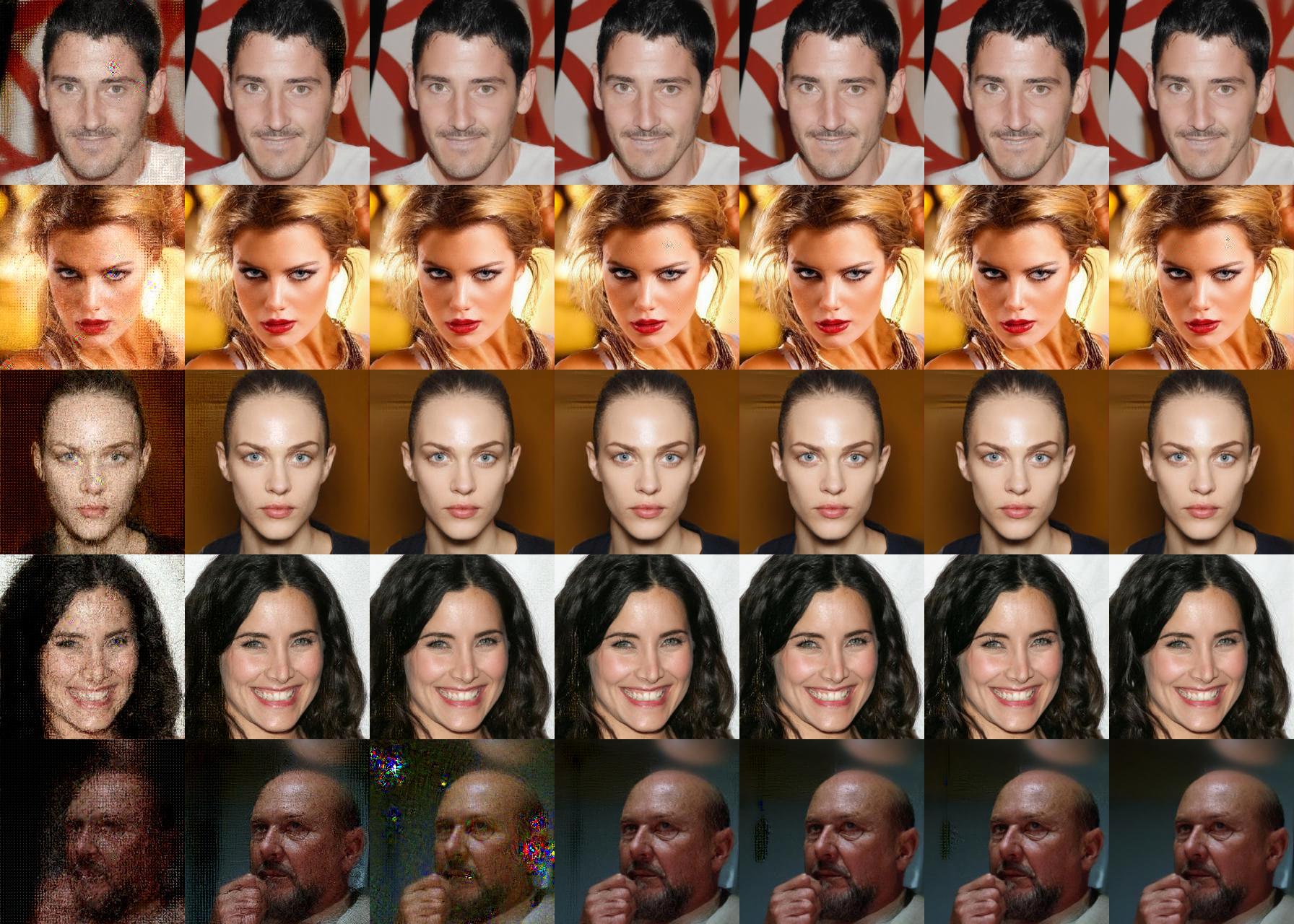}
    \caption{Final samples for superresolution using \pigdm with $\beta \in [1,0.75,0.5,0.25,0]$ (from left to right)}
    \label{fig:placeholder_11}
\end{figure}

\begin{figure}
    \centering
    \includegraphics[width=0.72\linewidth]{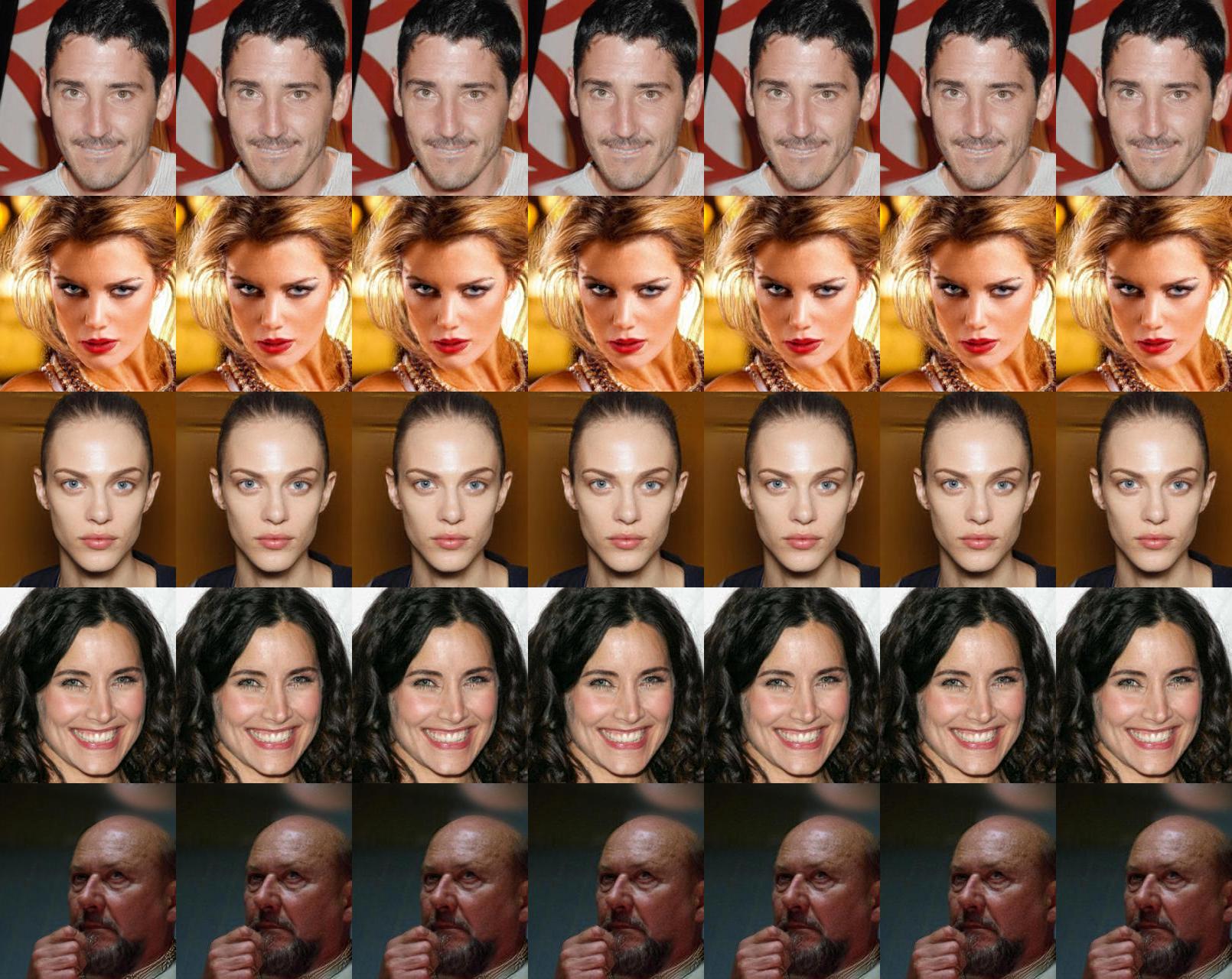}
    \caption{Final samples for deblurring using \pigdmp with $\beta \in [1,0.75,0.5,0.25,0]$ (from left to right)}
    \label{fig:placeholder_12}
\end{figure}

\begin{figure}
    \centering
    \includegraphics[width=0.72\linewidth]{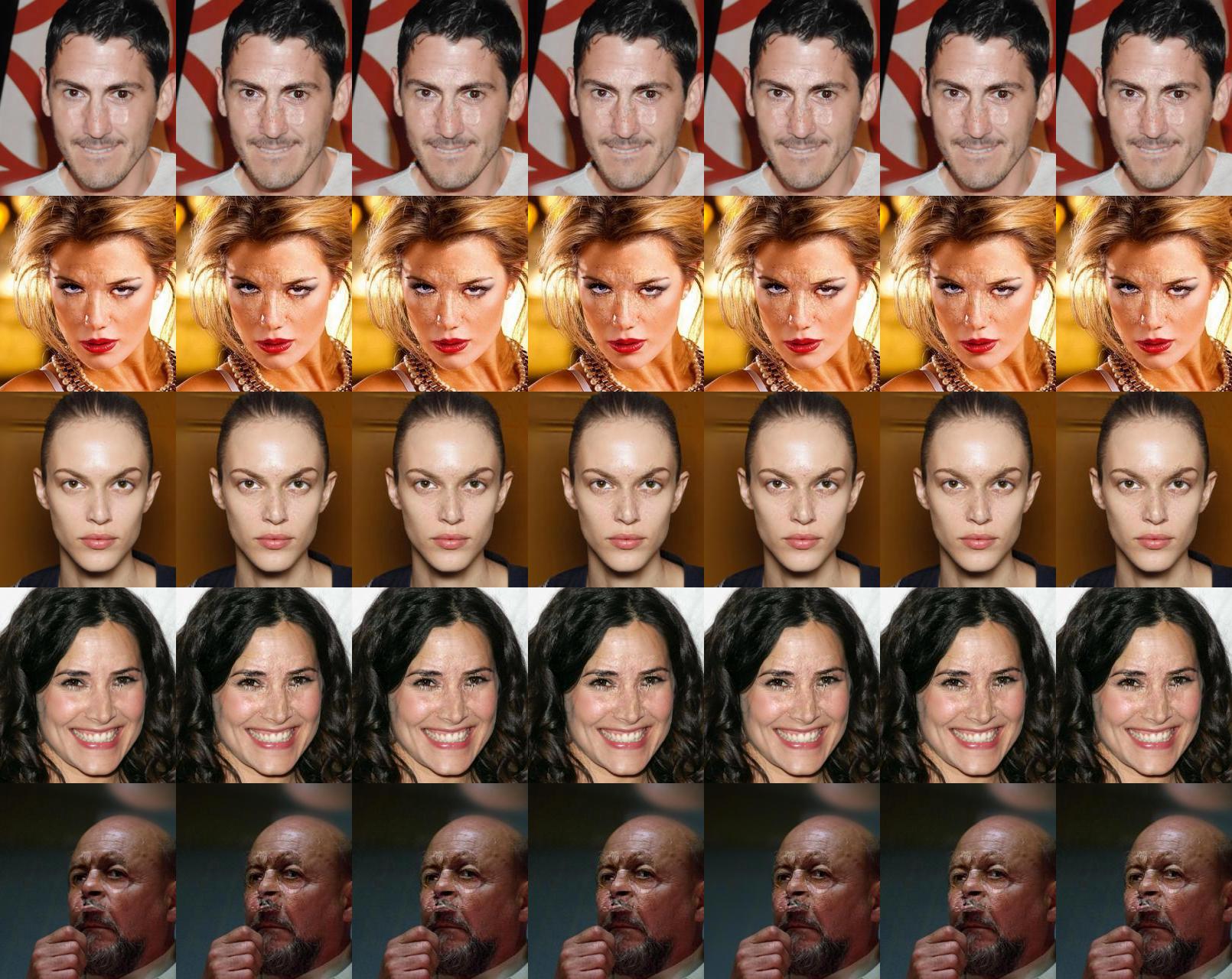}
    \caption{Final samples for inpainting using \pigdmp with $\beta \in [1,0.75,0.5,0.25,0]$ (from left to right)}
    \label{fig:placeholder_13}
\end{figure}

\begin{figure}
    \centering
    \includegraphics[width=0.72\linewidth]{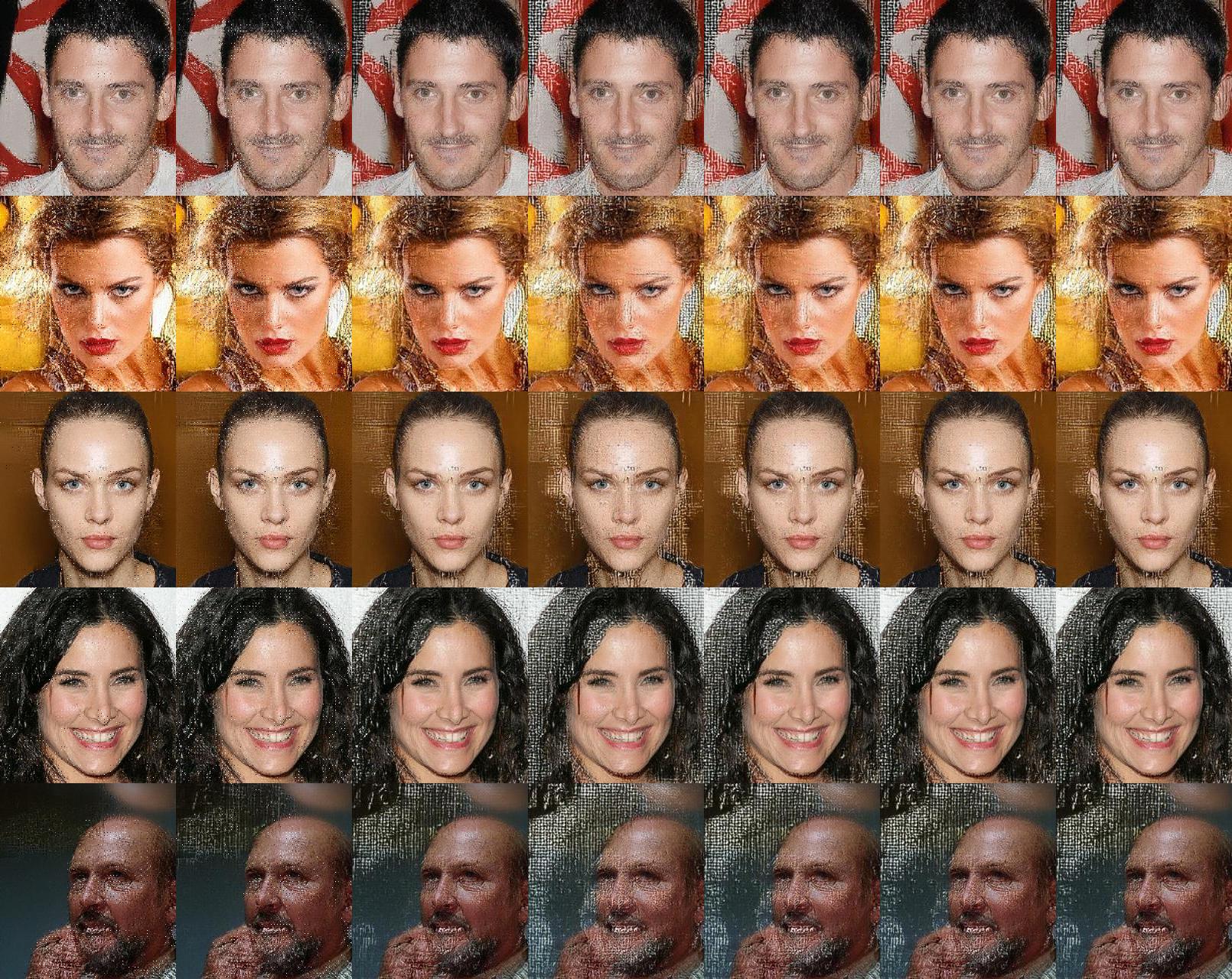}
    \caption{Final samples for superresolution using \pigdmp with $\beta \in [1,0.75,0.5,0.25,0]$ (from left to right)}
    \label{fig:placeholder_14}
\end{figure}

\newpage

\end{document}